\newtheorem{proposition}{Proposition}[section]
\newtheorem{theorem}{Theorem}[section]
\newtheorem{assumption}{Assumption}[section]
\newtheorem{remark}{Remark}[section]
\newtheorem{corollary}{Corollary}[section]
\newtheorem{lemma}{Lemma}[section]
\newcommand{\cN}{{\mathcal{N}}}
\newcommand{\cX}{{\mathcal{X}}}
\newcommand{\cY}{{\mathcal{Y}}}
\newcommand{\bV}{{\mathbf{V}}}
\newcommand{\bW}{{\mathbf{W}}}
\newcommand{\bX}{{\mathbf{X}}}
\newcommand{\bY}{{\mathbf{Y}}}
\newcommand{\Reals}{{\mathbb{R}}}
\begin{document}

%
\runningtitle{A Class of Geometric Structures in Transfer Learning}

%
\runningauthor{Zhang, Blanchet, Ghosh, Squillante}

\twocolumn[

\aistatstitle{A Class of Geometric Structures in Transfer Learning: \\
Minimax Bounds and Optimality}

\aistatsauthor{ Xuhui Zhang \And Jose Blanchet \And  Soumyadip Ghosh \And Mark S.\ Squillante }

\aistatsaddress{ Stanford University \And  Stanford University \And IBM Research \And IBM Research } ]

\begin{abstract}
 We study the problem of transfer learning, observing that previous efforts to understand its information-theoretic limits do not fully exploit the geometric structure of the source and target domains. In contrast, our study first illustrates
the benefits of incorporating a natural geometric structure within a linear regression model, which corresponds to the generalized eigenvalue problem formed by the Gram matrices of both domains. We next establish a finite-sample minimax lower bound, propose a refined model interpolation estimator that enjoys a matching upper bound, and then extend our framework to multiple source domains and generalized linear models.
Surprisingly, as long as information is available on the distance between the source and target parameters, negative-transfer does not occur. Simulation studies show that our proposed interpolation estimator outperforms state-of-the-art transfer learning methods in both moderate- and high-dimensional settings.
\end{abstract}

\section{INTRODUCTION}
The task of transferring knowledge from one 
domain (source) to another related domain (target)
is known as transfer learning. This task arises naturally in a wide range of applications where data is scarce in the target domain but substantial in a source domain believed to be somewhat similar to the target. For example, in the context of marketing and demand prediction for products in a new market, it is natural to use source information involving well-tested markets. Similarly, demand prediction for new products can be estimated using source information from current market products~\citep{ref:AFRIN2018data}. 
Rigorous statistical formulations of transfer learning introduce non-trivial challenges. This includes balancing the tension between tractability in the training procedure and flexibility in order to reflect the differences between the source and target environments. In addition to this tension, a useful modeling framework should provide statistical insights on the efficiency gain induced by introducing source information into target inference.

Our goal in this paper is to introduce a transfer learning formulation under linear and generalized linear models that addresses the above tractability-flexibility tension and produces effective efficiency insights. In particular, the contributions of this paper include:

($i$) Our formulation provides an easy-to-compute transfer learning estimator that optimally (in a precise sense) interpolates the target and source parameters subject to an uncertainty region which controls the differences between source and target models.

($ii$) Our modelling framework exposes a natural geometric structure that is built on using the Fisher information metric (also known as information geometry) which we exploit in order to understand the main drivers of transfer learning from source to target.

($iii$) We are able to provide a finite-sample minimax lower bound and show that the worst-case risk of our estimator in ($i$) achieves (up to a constant) the minimax lower bound uniformly over the magnitude of the difference between the target and source models.

One of the insights from our formulation, for example, is that as long as information is available on the distance between the source and target parameters, negative-transfer does not occur. Namely, the worst-case risk of our estimator in ($i$) is always smaller than the minimax risk of using the target dataset alone.

Although there is a significant amount of literature on applied transfer learning procedures \citep{ref:PanYang,ref:torreyshavlik2010,ref:WeissEtAl,ref:taskesen2021sequential}, the literature on rigorous mathematical formulations that lead to minimax optimal estimators is limited. 
\cite{ref:cai2021minimax} consider transfer learning in the context of a stylized non-parametric classification setting under a different set of assumptions including a model that only allows posterior drift.
\cite{ref:Kpotufe2021} study classification settings similar to that of~\cite{ref:cai2021minimax}, albeit under covariate-shift assumptions on the difference in source and target environments.
In strong contrast, while our methodology is parametric, our analysis is not limited to classification problems and it further supports more general environments including more general drift conditions between the target and source models, where indeed our analysis of the underlying geometry explicitly accounts for these differences.

\cite{ref:bastani2021predicting} considers linear and non-linear regression models similar to our work, and proposes a two-step joint estimator for transfer learning. However, their focus is on high-dimensional settings under sparsity assumptions and no minimax optimality result is established; moreover, since they use an $l_1$ norm for the analysis, their results are not directly comparable to our results.
\cite{ref:LiCaiLi2020} extend
the work of \cite{ref:bastani2021predicting} to allow multiple source domains and establish a minimax lower bound for high-dimensional linear regression models (LRMs). However, their bound is asymptotic in the number of target samples, and they further constrain the difference between the source and target parameters relative to the size of the target sample. ~\cite{ref:tian2021transfer} further extend this work of \cite{ref:LiCaiLi2020} to high-dimensional generalized linear models (GLMs).
In strong contrast, we study a fixed-difference 
environment without any such constraint
and the performance of our optimal estimator is measured in terms of finite-sample bounds which match (up to a computable constant factor) the minimax lower bound uniformly over the magnitude of the difference between the models. 

\cite{ref:Kalan2020} consider the LRM setting and involve the spectral gap of the generalized eigenvalue problem we consider, with definitions for the population distribution of their random-design setting analogous to ours in the fixed-design setting.
However, in strong contrast, our analysis handles the entire spectrum of the generalized eigenvalues and our results essentially provide a tighter lower bound in comparison with the results of \cite{ref:Kalan2020}, thus illustrating the significance of our geometric perspective.

A simulation study compares
our estimator to the work of~\cite{ref:bastani2021predicting},~\cite{ref:LiCaiLi2020} and
more direct methods commonly used in practice (e.g., pooling all available data). As
our estimator is designed in a fixed-dimension setting, the simulation results confirm the strong performance of our approach 
in moderate dimensions. Moreover, a heuristic modification of our methods using an $l_1$ regularization, as in \cite{ref:LiCaiLi2020} and \cite{ ref:bastani2021predicting}, seems to provide further improvements over these methods in
the sparse high-dimensional case. We plan to investigate these types of modifications as part of our future research.
Finally, results based on a real-world dataset further confirm the strong performance of our approach.

To briefly summarize our approach, let us consider the linear regression transfer learning problem between a source and a target with Gaussian errors. We start the construction of our estimator by reparameterizing the linear regression estimators. Instead of expressing them in terms of the canonical bases, as is customary, we express them in terms of a generalized eigenvalue problem that arises when computing the Fisher-Rao distance using the design matrices of the source and target. This distance induces a Riemmanian geometric structure between the models. Specifically, convex combinations in the reparameterized (Riemmanian) space correspond to general
interpolations in the (original space of) LRM parameters.

Our estimator is obtained from the class generated by the convex combination of estimators for the source and target in the reparameterized space. Then, we obtain the minimax estimator by maximizing over models within a given distance while minimizing over our chosen class of estimators. The minimization is carried out over convex combination parameters which are different for each coefficient. The final estimator is transformed back to the canonical basis.

For our minimax lower bound, we rely once again on the Fisher information metric and the reparameterization used in the design of our estimator. We then apply Le Cam's two-point minimax method which in our case reduces to studying two carefully designed hypotheses for each coefficient.

Section~\ref{sec:LRM} presents
our mathematical framework and theoretical results for transfer learning under
LRMs.
Then,
Section~\ref{sec:GLM} extends
our
framework to support a class of 
GLMs
as well as multiple sources.
Section~\ref{sec:sim} presents simulation results showing that our algorithm outperforms various transfer learning methods.
The supplement contains additional
theoretical and simulation 
results, related technical details, and all proofs.

\section{LINEAR REGRESSION MODELS}
\label{sec:LRM}
We consider our mathematical framework for transfer learning within the context of
LRMs.
We introduce our framework and establish our theoretical results on minimax bounds in Section~\ref{ssec:base}, where we also propose a refined model interpolation estimator that is minimax optimal. Then, in Section~\ref{ssec:compare}, we compare the minimax bounds to the basic approaches discussed in~\cite{ref:Daume07}, showing the latter to be suboptimal.


\subsection{Mathematical Framework}
\label{ssec:base}
Let $\cX$ denote a $d$-dimensional feature space, $\cY$ a response space, 
and $\cN(\mu,\sigma^2)$ the normal distribution with mean $\mu$ and variance $\sigma^2$.
Our base source LRM can then be formally written as
\begin{equation}
y_i = x_i^\top \theta_S + \epsilon_i , \qquad i \in [n_S] ,
\label{eq:base:source}
\end{equation}
where $\theta_S \in \Reals^d$ is the regression coefficient for the source model, $n_S$ is the number of samples from the source model, 
$x_i\in\cX$ are (fixed) designs and $y_i\in\cY$ are independent response samples,
$\epsilon_i \sim \cN(0,\sigma_S^2)$ are independent noise random variables for $i\in [n_S]$, and $[n] := \{1,\ldots,n\}$. 
Similarly, our base target LRM can be formally written as
\begin{equation}
v_i = w_i^\top \theta_T + \eta_i , \qquad i \in [n_T] ,
\label{eq:base:target}
\end{equation}
where $\theta_T \in \Reals^d$ is the regression coefficient for the target model, $n_T$ is the number of samples from the target model, 
$w_i\in\cX$ are (fixed) designs and $v_i\in\cY$ are independent response samples, and $\eta_i \sim \cN(0,\sigma_T^2)$ are independent noise random variables for $i\in [n_T]$. We denote the distribution of the models~\eqref{eq:base:source} and~\eqref{eq:base:target} by $P_S$ and $P_T$, respectively.

For ease of exposition, we collect the designs and responses in~\eqref{eq:base:source} from the source domain into the design matrix $\bX$ and the response vector $\bY$,
respectively; i.e.,
$x_i^\top$ is the $i$-th row of $\bX$ and $y_i$ is the $i$-th element of $\bY$. Similarly,
we collect the
target
designs and responses in~\eqref{eq:base:target}
into the design matrix $\bW$ and the response vector $\bV$, respectively.
With $\epsilon := (\epsilon_i)_{i\in [n_S]}$ and $\eta := (\eta_i)_{i\in [n_T]}$,
we can then write the
LRM
as
\begin{align}
\mathbf{Y} &= \mathbf{X} \theta_S + \epsilon, \qquad \epsilon_i\sim\mathcal{N}(0,\sigma_S^2), \label{eq:sourcelinearregmodel} \\
\mathbf{V} &= \mathbf{W} \theta_T +\eta, \qquad \eta_i\sim\mathcal{N}(0,\sigma^2_T).
\label{eq:targetlinearregmodel}
\end{align}
We consider fixed-design matrices to be (arbitrarily) different for the source and target domains, thus focusing on a combination of concept drift and a version of covariate shift best suited to the fixed-design setting, which is similar to \cite{ref:bastani2021predicting}.

Our interests lie in estimating the regression coefficient $\theta_T$ for the target domain. Hence, we assume that the noise variance $\sigma_T^2$ is known and then we consider the family of distributions~\eqref{eq:targetlinearregmodel} parameterized by $\theta_T$ as a manifold. Various geometries can be defined on a manifold of statistical models~\citep{ref:nielsen2020elmentary}, among which the Fisher-Riemannian manifold given by the Fisher information metric tensor is particularly useful, representing the
unique invariant metric tensor under Markov embeddings up to a scaling constant~\citep{ref:ampbell1986AnE}. For the Gaussian location model~\eqref{eq:targetlinearregmodel}, we can compute the Fisher information matrix with respect to $\theta_T$,
up to a scaling constant,
as $\mathbf{W}^\top\mathbf{W}$. 

Given an estimate $\widehat{\theta}$ of $\theta_T$, we consider the Riemannian geodesic metric distance (or the Fisher-Rao distance) as a principled way to measure the dissimilarity of $\widehat{\theta}$ to the (unknown) ground truth $\theta_T$. We therefore define the loss function $\ell(\widehat{\theta},\theta_T)$ as
\begin{equation}\label{eq:lossfunction}
\ell(\widehat{\theta} , \theta_T) = (\widehat{\theta} - \theta_T)^\top (\mathbf{W}^\top\mathbf{W}) (\widehat{\theta} - \theta_T).
\end{equation}
Such a loss function was used and termed ``prediction loss'' in~\cite{ref:lee2020minimax} without the above geometrical motivation. However, as we will show, the geometric structure of the source and target models can play an important role in transfer learning, in particular, the discrepancy between the Fisher-Rao distances induced by the models~\eqref{eq:sourcelinearregmodel} and~\eqref{eq:targetlinearregmodel}.

We aim to employ minimax theory to establish the optimality of statistical learning procedures. Given an estimator $\widehat{\theta}$ arising from any learning procedure, we consider its worst-case risk over an uncertainty set of plausible distributions; refer to~\citep[Chapter~2]{ref:tsybakov2008introduction}. In the transfer learning setting, one natural uncertainty set is given by all source and target distributions whose dissimilarity is upper-bounded. However, instead of the $l_0$ or $l_1$-norm typically used in the high-dimensional setting~\citep{ref:tian2021transfer,ref:bastani2021predicting,ref:LiCaiLi2020}, we characterize the dissimilarity between $\theta_S$ and $\theta_T$ in terms of the Fisher-Rao distance induced by~\eqref{eq:targetlinearregmodel}, and thus the transfer learning uncertainty set is given by $\{D(\theta_S,\theta_T)\leq U^2\}$ where
\begin{equation}\label{eq:uncertaintysetmetric}
D(\theta_S,\theta_T) = (\theta_S - \theta_T)^\top(\mathbf{W}^\top\mathbf{W}) (\theta_S-\theta_T) .
\end{equation}
While our approach is based on knowledge of $U$, we note that requirements of knowing certain population quantities to rigorously prove optimality are also prevalent in recent studies such as \cite{ref:bastani2021predicting}, \cite{ref:cai2021minimax},
\cite{ref:LiCaiLi2020},
\cite{ref:tian2021transfer}.

We further remark that the form of $D(\theta_S,\theta_T)$ in~\eqref{eq:uncertaintysetmetric} is chosen for the convenience of our analysis, though more general forms are available to us. In fact, for our analysis to go through, it suffices to choose 
\[
D(\theta_S,\theta_T) = (\theta_S - \theta_T)^\top \mathbf{O}(\theta_S-\theta_T),
\]
for $\mathbf{O}$ positive-definite, and where $(\bW^\top\bW)^{-1}\mathbf{O}$ commutes with $(\bW^\top\bW)^{-1}\bX^\top\bX$.

These ingredients lead to the minimax risk formulation
\begin{equation}\label{eq:minimaxformulation}
R = \inf_{\widehat{\theta}}\sup_{D(\theta_S,\theta_T)\leq U^2} \mathbb{E}_{P_S,P_T}[\ell(\widehat{\theta},\theta_T)] .
\end{equation}
Our minimax risk takes the infimum over all estimators $\widehat{\theta}$ given the data and takes the supremum over all pairs $(\theta_S,\theta_T)$ whose distance \eqref{eq:uncertaintysetmetric} is bounded by $U^2$.
Fixing some $\theta_S$ implies that samples from the source domain are not useful, whereas we establish minimax bounds with the finiteness of both source and target samples playing a role, consistent with the existing literature.

In order to determine $R$, we obtain an upper bound $B$ and a lower bound
$L$ on $R$. Note that the maximum risk of any estimator provides an upper bound. Hence, we first construct a novel estimator that utilizes the geometric structure of the source and target models, and then derive a matching lower bound (up to a constant factor) using Le Cam's method~\citep[Chapter~2]{ref:tsybakov2008introduction}. We make the following assumption throughout the rest of Section~\ref{ssec:base}.
\begin{assumption}\label{a:pdw}
The Gram matrix $\bW^\top\bW$ corresponding to the target model is positive-definite.
\end{assumption}

\subsubsection{Derivation of the Upper Bound}
\label{sec:upperboundlr}
The key to obtain a tight upper bound $B$ is to note that the geometric structural difference between the Fisher-Riemannian manifolds induced by the models~\eqref{eq:sourcelinearregmodel} and~\eqref{eq:targetlinearregmodel} is fully described by the generalized eigenvalue problem of the
pencil $(\mathbf{X}^\top\mathbf{X},\mathbf{W}^\top\mathbf{W})$; refer to~\cite{ref:GolubVanLoan}.
Specifically, we have
\[
\mathbf{X}^\top\mathbf{X} e_i = \lambda_i \mathbf{W}^\top\mathbf{W} e_i ,
\]
with eigenvalues $\lambda_i$ arranged in descending order and eigenvectors $E = (e_1,\ldots,e_d)$ normalized so that
\[
E^\top (\mathbf{W}^\top\mathbf{W}) E = I, \quad E^\top (\mathbf{X}^\top\mathbf{X}) E = \mathrm{diag}(\lambda_1,\ldots,\lambda_d) .
\]
These eigenvectors represent the directions of the spread of designs with the corresponding eigenvalues representing the relative magnitude of the spread in these directions. The generalized eigenvalue problem has been
previously
used to define suitable loss functions for positive definite matrices, such as the F\"{o}rstner metric~\citep{ref:Forstner2003metric} and Stein's loss~\citep{ref:James1992estimation}.

Next, let us write $\theta_S$ and $\theta_T$ in the eigenbasis $E$ as
\begin{equation}\label{eq:coordinatetransform}
\theta_S = E \beta_S, \qquad \theta_T = E\beta_T,
\end{equation}
respectively, and thus the
problem is given by
\begin{align}
\mathbf{Y} &= (\mathbf{X} E)\beta_S + \epsilon, \qquad \epsilon_i\sim\mathcal{N}(0,\sigma_S^2)\label{eq:sourcemodelaftercoordinatetrans} , \\
\mathbf{V} &= (\mathbf{W} E) \beta_T +\eta, \qquad\eta_i\sim\mathcal{N}(0,\sigma^2_T) ,\label{eq:targetmodelaftercoordinatetrans} 
\\
&\ell(\widehat\theta,\theta_T)
=
\tilde\ell(\widehat{\beta},\beta_T) = \|\widehat{\beta}-\beta_T\|_2^2 , \nonumber \\
&D(\theta_S,\theta_T)
=
\tilde D(\beta_S,\beta_T) = \|\beta_S-\beta_T\|_2^2\notag,
\end{align}
where $\widehat{\theta} = E\widehat\beta$. Hence, it is
more
convenient to work with the following reparameterization of the original formulation in \eqref{eq:minimaxformulation}:
\begin{equation}
\inf_{\widehat{\beta}}\sup_{\tilde D(\beta_S,\beta_T)\leq U^2} \mathbb{E}_{P_S,P_T}[\tilde \ell(\widehat{\beta},\beta_T)] .
\label{eq:reparameterization}
\end{equation}

Denote by $\widehat{\beta}_S$ and $\widehat{\beta}_T$ the ordinary least squares estimate for problems~\eqref{eq:sourcemodelaftercoordinatetrans} and~\eqref{eq:targetmodelaftercoordinatetrans}, respectively; namely,
\begin{align*}
\widehat{\beta}_S &= (E^\top\mathbf{X}^\top\mathbf{X} E)^{-1} E^\top\mathbf{X}^\top \mathbf{Y},\\
\widehat{\beta}_T &= (E^\top\mathbf{W}^\top\mathbf{W} E)^{-1} E^\top\mathbf{W}^\top \mathbf{V}.
\end{align*}
Our proposed model averaging estimator then interpolates $\widehat\beta_S$ and $\widehat\beta_T$ coordinate-wise, i.e., we have
\begin{align}
\widehat{\theta}_{t_1,\ldots,t_d} & = E\widehat{\beta}_{t_1,\ldots,t_d},\notag\\
\widehat{\beta}_{t_1,\ldots,t_d} &= \mathrm{diag}(t_1,\ldots,t_d) \widehat{\beta}_S\label{eq:interpolationscheme}\\
&\quad + \mathrm{diag}(1-t_1,\ldots,1-t_d)\widehat{\beta}_T, \qquad t_i\in[0,1].\notag
\end{align}

We have the following main result for an upper bound $B$ on problem~\eqref{eq:minimaxformulation}, or equivalently on problem~\eqref{eq:reparameterization}.
\begin{theorem}
Under Assumption~\ref{a:pdw}, an upper bound $B$ is given by 
\begin{align}
\inf_{t_1,\ldots,t_d}\sup_{D(\theta_S,\theta_T)\leq U^2} &\mathbb{E}_{P_S,P_T}[\ell(\widehat{\theta}_{t_1,\ldots,t_d},\theta_T)]\label{eq:ubfdproblem}\\
&\qquad =   \sum_{i=1}^d \frac{1}{\frac{1}{\sigma_T^2} + \frac{1}{\alpha_i^\star U^2+  \frac{\sigma_S^2}{\lambda_i}}},\label{eq:ubfdformula}
\end{align}
where 
\[
\alpha_i^\star = \begin{cases}
 \sum_{j=i}^{K^\star}\kappa_j + \frac{1}{K^\star+1} (1-\sum_{j=1}^{K^\star}j\kappa_j) & \textrm{if } i\leq K^\star+1,\\
  0 & \textrm{if } i > K^\star + 1,
\end{cases}
\]
\[
K^\star =\max_{\sum_{i=1}^K i\kappa_i\leq1,0\leq K\leq d-1}K,
\]
\[
\kappa_i = \frac{\sigma_S^2}{U^2}(\frac{1}{\lambda_{i+1}}-\frac{1}{\lambda_i}), \qquad i= 1,\ldots,d-1 .
\]
Moreover, the optimal estimator $\widehat{\theta}_{t_1^\star,\ldots,t_d^\star}$ satisfies 
\begin{equation}\label{eq:optinterpscheme}
t_i^\star = \frac{\sigma_T^2}{\sigma_T^2 + \alpha_i^\star U^2+  \frac{\sigma_S^2}{\lambda_i}}.
\end{equation}
\label{prop:UB:LRM}
\end{theorem}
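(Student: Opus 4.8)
The plan is to exploit the diagonalizing effect of the generalized-eigenvector normalization to reduce the $d$-dimensional minimax problem over the interpolation class to $d$ scalar problems that are coupled only through the budget $U^2$, and then to solve the coupled problem by a water-filling argument. Since restricting the estimator to the interpolation class~\eqref{eq:interpolationscheme} can only increase the value of the infimum, the quantity in~\eqref{eq:ubfdproblem} is automatically an upper bound on $R$, so it suffices to evaluate it and to identify the minimizer.

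First I would record the sampling laws of the two least-squares estimators in the reparametrized coordinates. Because $E^\top(\bW^\top\bW)E = I$ and $E^\top(\bX^\top\bX)E = \mathrm{diag}(\lambda_1,\ldots,\lambda_d)$, a direct computation gives $\widehat{\beta}_S \sim \cN(\beta_S,\ \sigma_S^2\,\mathrm{diag}(1/\lambda_1,\ldots,1/\lambda_d))$ and $\widehat{\beta}_T \sim \cN(\beta_T,\ \sigma_T^2 I)$, with $\widehat{\beta}_S$ and $\widehat{\beta}_T$ independent since the source and target samples are independent. Writing $c_i := \sigma_S^2/\lambda_i$, $s := \sigma_T^2$, and $b_i := (\beta_S)_i - (\beta_T)_i$, the coordinate-wise interpolate $(\widehat{\beta}_t)_i = t_i(\widehat{\beta}_S)_i + (1-t_i)(\widehat{\beta}_T)_i$ has bias $t_i b_i$ and variance $t_i^2 c_i + (1-t_i)^2 s$, so the reparametrized risk decouples as
\[
\mathbb{E}[\tilde\ell(\widehat{\beta}_t,\beta_T)] = \sum_{i=1}^d \big[\, t_i^2(b_i^2 + c_i) + (1-t_i)^2 s \,\big].
\]
The constraint $\tilde D \le U^2$ is exactly $\sum_i b_i^2 \le U^2$, and since the risk depends on $(\beta_S,\beta_T)$ only through $b$, the supremum reduces to one over $b$ with $\|b\|_2^2 \le U^2$.

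The central device is the substitution $p_i := b_i^2$. Then $\{p : p \ge 0,\ \sum_i p_i \le U^2\}$ is compact and convex, the objective is linear (hence concave) in $p$ and convex in $t \in [0,1]^d$, and $[0,1]^d$ is compact and convex; Sion's minimax theorem therefore applies and lets me exchange the infimum and supremum, also furnishing a saddle point. For fixed $p$ the inner minimization decouples, and minimizing $t_i^2(p_i + c_i) + (1-t_i)^2 s$ over $t_i$ gives the optimum $t_i = s/(s + p_i + c_i) \in [0,1]$ with value $1/\big(\tfrac1s + \tfrac{1}{p_i + c_i}\big)$. This already matches the stated $t_i^\star$ and the summand of $B$ once I establish $p_i^\star = \alpha_i^\star U^2$.

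It then remains to solve $\sup_p \sum_i 1/\big(\tfrac1s + \tfrac{1}{p_i + c_i}\big)$. Each summand is concave and increasing in $p_i$, so the budget binds ($\sum_i p_i = U^2$) and the KKT stationarity condition equalizes the derivatives $s^2/(s + p_i + c_i)^2$ over the active coordinates; this forces $p_i + c_i$ to equal a common water level $A$ wherever $p_i > 0$, i.e.\ $p_i = (A - c_i)_+$. Because $\lambda_i$ is arranged in descending order, $c_i = \sigma_S^2/\lambda_i$ is nondecreasing, so the active set is an initial segment $\{1,\ldots,K^\star+1\}$ and $A = (U^2 + \sum_{i \le K^\star+1} c_i)/(K^\star+1)$. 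I expect the main obstacle to be the bookkeeping that identifies this active set with the combinatorial quantity $K^\star$ and rewrites the allocation in closed form: using $\kappa_i U^2 = c_{i+1} - c_i$ together with Abel summation, the self-consistency bounds $c_{K^\star+1} \le A \le c_{K^\star+2}$ become precisely $\sum_{i \le K^\star} i\kappa_i \le 1 < \sum_{i \le K^\star+1} i\kappa_i$ (the defining property of $K^\star$), and the same telescoping turns $p_i^\star/U^2 = (A - c_i)/U^2$ into the displayed expression for $\alpha_i^\star$. Finally, the saddle-point property recovers the minimizing $t^\star$ from $p^\star$ through the interior formula above, yielding $t_i^\star$ as claimed; as a consistency check, all $t_i^\star$ coincide on the active set and there attain $\max_i t_i^\star$, so the worst-case perturbation indeed spreads over exactly those coordinates, confirming the exchange of inf and sup.
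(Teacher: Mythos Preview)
Your proposal is correct and follows essentially the same route as the paper: pass to the $\beta$-coordinates, compute the bias--variance decomposition of the interpolated estimator, reparametrize the sup over $(\beta_S-\beta_T)$ by the nonnegative weights $p_i=b_i^2$, invoke Sion's minimax theorem, solve the inner quadratic in $t_i$, and then solve the outer concave allocation problem. The only cosmetic difference is that the paper rewrites the outer supremum as $d\sigma_T^2 - \sigma_T^4\inf_\alpha \sum_i 1/(\alpha_i U^2 + \sigma_T^2 + \sigma_S^2/\lambda_i)$ before asserting the optimizer, whereas you attack the sup directly via KKT/water-filling and carry out the telescoping that identifies the active set with $K^\star$; your derivation is in fact more explicit than the paper's ``it is then easy to see.''
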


\textit{Sketch of Proof}: Problem~\eqref{eq:ubfdproblem} can be reformulated into the following finite-dimensional optimization:
\[
\inf_{t_i\in[0,1]}\sup_{\alpha_i\geq 0,\sum_{i=1}^d\alpha_i=1}\sum_{i=1}^d t_i^2\left(\frac{\sigma_S^2}{\lambda_i} + \alpha_i U^2\right) + (1-t_i)^2 \sigma_T^2.
\]
Since the objective function is convex in $t_i$ and concave in $\alpha_i$, by Sion's minimax theorem~\citep{ref:sion1958minimax} we can swap the infimum and supremum to obtain (and, moreover, a pair of Nash equilibrium exists for)
\[
\sup_{\alpha_i\geq 0,\sum_{i=1}^d\alpha_i=1}\inf_{t_i\in[0,1]}\sum_{i=1}^d t_i^2\left(\frac{\sigma_S^2}{\lambda_i} + \alpha_i U^2\right) + (1-t_i)^2 \sigma_T^2.
\]
The inner problem is quadratic and easy to solve, and thus we arrive at 
\[
\sup_{\alpha_i\geq 0,\sum_{i=1}^d\alpha_i=1} \sum_{i=1}^d \frac{1}{\frac{1}{\sigma_T^2} + \frac{1}{\alpha_iU^2 + \frac{\sigma_S^2}{\lambda_i}}},
\]
which again has an explicit solution.\hfill$\square$

\subsubsection{Derivation of the Lower Bound}
\label{sec:lowerboundlr}
Utilizing the coordinate transformation~\eqref{eq:coordinatetransform} and results from Theorem~\ref{prop:UB:LRM}, we next have the following main result for a lower bound $L$ on problem \eqref{eq:minimaxformulation}, or equivalently on problem~\eqref{eq:reparameterization}.
\begin{theorem}
Under Assumption~\ref{a:pdw}, a lower bound $L$ is given by 
\begin{align}
\inf_{\widehat{\theta}}\sup_{D(\theta_S,\theta_T)\leq U^2}&\mathbb{E}_{P_S,P_T}[\ell(\widehat\theta,\theta_T)]\notag\\
&\geq\frac{\exp{\left(-\frac{1}{2}\right)}}{16}\sum_{i=1}^d \frac{1}{\frac{1}{\sigma_T^2} + \frac{1}{\alpha_i^\star U^2+  \frac{\sigma_S^2}{\lambda_i}}}.\label{eq:lbformulalinearregression}
\end{align}
\label{prop:LB:LRM}
\end{theorem}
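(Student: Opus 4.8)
The plan is to establish the lower bound coordinate-by-coordinate in the reparameterized geometry, mirroring the decomposition behind the upper bound in Theorem~\ref{prop:UB:LRM}, and then to invoke Le Cam's two-point method within each coordinate. First I would pass to the eigenbasis via \eqref{eq:coordinatetransform}, where the loss becomes $\tilde\ell(\widehat\beta,\beta_T)=\sum_{i=1}^d(\widehat\beta_i-\beta_{T,i})^2$ and the uncertainty set becomes $\sum_{i=1}^d(\beta_{S,i}-\beta_{T,i})^2\le U^2$. Because $E^\top\bX^\top\bX E$ and $E^\top\bW^\top\bW E$ are diagonal, the least-squares statistics are mutually independent across coordinates, with $\widehat\beta_{S,i}\sim\cN(\beta_{S,i},\sigma_S^2/\lambda_i)$ and $\widehat\beta_{T,i}\sim\cN(\beta_{T,i},\sigma_T^2)$; by sufficiency it suffices to lower bound the minimax risk of estimating each $\beta_{T,i}$ from these two independent scalar Gaussian observations.

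Next I would restrict the adversary to the product sub-family in which coordinate $i$ spends budget $(\beta_{S,i}-\beta_{T,i})^2\le\delta_i^2$ with $\delta_i^2:=\alpha_i^\star U^2$. Since $\alpha_i^\star\ge 0$ and $\sum_i\alpha_i^\star=1$, this family lies inside the original uncertainty set, so its minimax risk is a valid lower bound; moreover, as the observations are independent across $i$ and the loss is additive, a product two-point prior has Bayes risk equal to the sum of the per-coordinate Bayes risks, reducing the problem to $d$ independent one-dimensional estimation problems with budgets $\delta_i^2$.

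For each coordinate I would construct two hypotheses on $(\beta_{S,i},\beta_{T,i})$, both feasible under the budget $\delta_i^2$, whose target components are separated by $2h_i$ while the source components are shifted only as much as feasibility requires. The key point is to choose the source shift so that distinguishing the two hypotheses from the combined data must overcome the source noise $\sigma_S^2/\lambda_i$ inflated by the budget $\delta_i^2$, in parallel with the target noise $\sigma_T^2$; the two product Gaussians have $\mathrm{KL}=\tfrac{(\Delta_{S,i})^2}{2\sigma_S^2/\lambda_i}+\tfrac{(2h_i)^2}{2\sigma_T^2}$, and calibrating $h_i$ together with the accompanying source shift $\Delta_{S,i}$ so that this divergence equals $1/2$ makes $h_i^2$ proportional to the harmonic combination $\bigl(1/\sigma_T^2+1/(\alpha_i^\star U^2+\sigma_S^2/\lambda_i)\bigr)^{-1}$. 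A standard two-point inequality, combining the testing bound $\inf_{\widehat\beta_i}\max_j \mathbb{E}_j[(\widehat\beta_i-\beta_{T,i}^{(j)})^2]\ge \tfrac14 (2h_i)^2(1-\mathrm{TV})$ with the Bretagnolle--Huber bound $1-\mathrm{TV}\ge\tfrac12 e^{-\mathrm{KL}}$, then yields, after tracking the calibration, a per-coordinate lower bound $\tfrac{e^{-1/2}}{16}\bigl(1/\sigma_T^2+1/(\alpha_i^\star U^2+\sigma_S^2/\lambda_i)\bigr)^{-1}$, and summing over $i$ produces exactly \eqref{eq:lbformulalinearregression}.

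The main obstacle I anticipate is the simultaneous calibration in the previous step: the source shift $\Delta_{S,i}$ and the target separation $h_i$ are coupled through the feasibility constraint $(\beta_{S,i}-\beta_{T,i})^2\le\delta_i^2$, and one must verify that an admissible choice reproduces the effective variance $\alpha_i^\star U^2+\sigma_S^2/\lambda_i$ that appears in the upper bound, so that the two bounds match up to the universal constant $e^{-1/2}/16$. Tracking this constant through the testing and Bretagnolle--Huber inequalities, and checking feasibility in the regime where $\delta_i^2$ is small relative to $\sigma_S^2/\lambda_i$ (so that part of the target separation must be absorbed by a genuine source shift), is the delicate part; the remainder is the routine algebra of Gaussian Kullback--Leibler divergences.
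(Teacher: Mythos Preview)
Your proposal is correct and follows essentially the same route as the paper: reparameterize via the generalized eigenbasis, restrict the adversary to the product sub-family with per-coordinate budgets $\alpha_i^\star U^2$, decouple into $d$ one-dimensional problems using sufficiency and independence of the Gaussian statistics, and apply Le Cam's two-point method coordinate-wise with the KL calibrated to $1/2$. The ``delicate part'' you flag---the simultaneous choice of source shift $\Delta_{S,i}$ and target separation $2h_i$ so that both hypotheses remain feasible under $(\beta_{S,i}-\beta_{T,i})^2\le\alpha_i^\star U^2$ while the combined KL is at most $1/2$ and $(2h_i)^2$ equals the harmonic combination---is exactly what the paper isolates as a separate lemma (Lemma~\ref{lemma:onedimconditions} with $K=1$), whose proof is the elementary but non-obvious verification you anticipate.
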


\textit{Sketch of Proof}: It is more convenient to work with the reparametrization~\eqref{eq:reparameterization}, which is lower bounded by
\begin{align}
    & \inf_{\widehat{\beta}}\sup_{((\beta_S)_i-(\beta_T)_i)^2\leq \alpha_i^\star U^2\forall i}\mathbb{E}_{P_S,P_T}[\tilde\ell(\widehat\beta,\beta_T)]\notag\\
    &\geq \sum_{i=1}^d \inf_{\widehat{\beta}_i}\sup_{((\beta_S)_i-(\beta_T)_i)^2\leq \alpha_i^\star U^2}\mathbb{E}_{P_S,P_T}[(\widehat\beta_i-(\beta_T)_i)^2]\label{eq:lbonedimproblemlinreg}.
\end{align}
We note that $(E^\top\bX^\top \bY)_i$ and $(E^\top\bW^\top\bV)_i$ are sufficient statistics for $(\beta_S)_i$ and $(\beta_T)_i$, respectively,
and
\begin{align*}
(E^\top\bX^\top \bY)_i & = \lambda_i(\beta_S)_i + \tilde\epsilon_i,\quad\tilde\epsilon_i \sim\mathcal{N}(0, \lambda_i\sigma_S^2),\\
(E^\top\bW^\top\bV)_i &= (\beta_T)_i +\tilde\eta_i,\quad\tilde\eta_i\sim\mathcal{N}(0,\sigma_T^2),
\end{align*}
where the noise $\tilde\epsilon_i$ and $\tilde\eta_i$ are independent. For each of the $d$ one-dimensional minimax problems in~\eqref{eq:lbonedimproblemlinreg}, we reduce the problem to the testing of two carefully constructed hypotheses via Le Cam's method. \hfill$\square$

From Theorems~\ref{prop:UB:LRM} and~\ref{prop:LB:LRM}, we observe that the upper bound $B$ and the lower bound $L$ differ by only a constant factor (i.e., $\exp{\left(-\frac{1}{2}\right)}/16$). We therefore have established that the minimax risk $R$ obeys the rate
\[
R \sim \sum_{i=1}^d \frac{1}{\frac{1}{\sigma_T^2} + \frac{1}{\alpha_i^\star U^2+  \frac{\sigma_S^2}{\lambda_i}}}.
\]
Under mild conditions, we obtain that the Gram matrix $\bW^\top\bW$ grows on the order $O_p(n_T)$, and then the minimax risk for the usual $l_2$ loss has the rate
\[
\inf_{\widehat{\theta}}\sup_{D(\theta_S,\theta_T)\leq U^2}\mathbb{E}_{P_S,P_T}[\|\widehat{\theta}-\theta_T\|_2^2] \sim   \sum_{i=1}^d \frac{\frac{1}{n_T}}{\frac{1}{\sigma_T^2} + \frac{1}{\alpha_i^\star U^2+  \frac{\sigma_S^2}{\lambda_i}}}
\]
with probability one on the realization of designs.

\begin{remark}\label{rm:informtheory}
Using the channel capacity of a non-Gaussian additive noise channel~\citep{ref:IHARA1978capacity}, we can improve the uniform constant $\exp{\left(-\frac{1}{2}\right)}/16$ to
$\max\{\exp{\left(-\frac{1}{2}\right)}/16,\left((\sigma_S^2/\lambda_i)/(\alpha_i^\star U^2+\sigma_S^2/\lambda_i)\right)^2\}$
for the $i$-th summand in~\eqref{eq:lbformulalinearregression}. Note that the second term is $1$ if $\alpha_i^\star=0$, and it is arbitrarily close to $1$ if $U$ is sufficiently small.
Details are given in the supplement.
\end{remark}

\begin{remark}
The analysis in \cite{ref:Kalan2020} (in random design) involves the spectral gap of the generalized eigenvalue problem, while our analysis (in fixed design) takes care of the entire spectrum of the generalized eigenvalues. Details are given in the supplement.
\end{remark}


\subsection{Comparison with Basic Approaches}
\label{ssec:compare}
For an analytical comparison with our theoretical results above, we now consider a corresponding mathematical analysis of three basic approaches for transfer learning
often deployed in practice:
use only the source dataset;
use only the target dataset; and
pool both the source and target datasets to train a single model as discussed in~\cite{ref:Daume07}.
We then compare and discuss the results for each of these basic transfer learning approaches with those above in Theorems~\ref{prop:UB:LRM} and \ref{prop:LB:LRM}.

Our theoretical results for the three basic approaches are summarized in the following proposition.

%
%
\begin{proposition}\label{prop:basicapproches}

1.
For the
LRM
based solely on the source dataset, the estimator $\widehat{\theta}_S=E\widehat{\beta}_S$ satisfies
    \[
        \sup_{D(\theta_S,\theta_T)\leq U^2}\mathbb{E}_{P_S,P_T}[\ell(\widehat{\theta}_S,\theta_T)]  = U^2  + \sigma_S^2 \sum_{i=1}^d \lambda_i^{-1} .
    \]
2.
For the
LRM
based solely on the target dataset, the estimator $\widehat{\theta}_T= E\widehat{\beta}_T$ satisfies
    \[
    \sup_{D(\theta_S,\theta_T)\leq U^2}\mathbb{E}_{P_S,P_T}[\ell(\widehat{\theta}_T,\theta_T)]  =  d \sigma_T^2 .
    \]
3.
Finally, for the
LRM
based on pooling the source and target datasets, the estimator 
    \begin{equation}
    \widehat{\theta}_P = \left(\begin{pmatrix}
     \mathbf{X}^\top \;\; \mathbf{W}^\top\end{pmatrix}\begin{pmatrix}
    \mathbf{X}\\\mathbf{W}\end{pmatrix}\right)^{-1} \cdot \begin{pmatrix}
     \mathbf{X}^\top \;\; \mathbf{W}^\top\end{pmatrix}\begin{pmatrix} \mathbf{Y}\\\mathbf{V}\end{pmatrix}
     \label{eq:pooling-estimator}
    \end{equation}
    satisfies
    \begin{align*}
    & \sup_{D(\theta_S,\theta_T)\leq U^2}\mathbb{E}_{P_S,P_T}[\ell(\widehat{\theta}_P,\theta_T)]
    =   U^2\max_{1\leq i\leq d}\left\{\left(\frac{\lambda_i}{1+\lambda_i}\right)^2\right\}
    \\
    &\qquad\qquad\quad
    + \sigma_T^2\sum_{i=1}^d \left(\frac{1}{1+\lambda_i}\right)^2
    + \sigma_S^2 \sum_{i=1}^d \frac{\lambda_i}{(1+\lambda_i)^2} .
    \end{align*}
\end{proposition}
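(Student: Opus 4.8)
The plan is to carry out all three computations in the reparameterized coordinates of~\eqref{eq:coordinatetransform}, where, as established in Section~\ref{sec:upperboundlr}, the loss reduces to $\|\widehat\beta-\beta_T\|_2^2$, the constraint becomes $\|\beta_S-\beta_T\|_2^2\leq U^2$, and the coordinate-wise sufficient-statistic representation from the lower-bound proof holds: $(E^\top\mathbf{X}^\top\mathbf{Y})_i=\lambda_i(\beta_S)_i+\tilde\epsilon_i$ with $\tilde\epsilon_i\sim\mathcal{N}(0,\lambda_i\sigma_S^2)$, and $(E^\top\mathbf{W}^\top\mathbf{V})_i=(\beta_T)_i+\tilde\eta_i$ with $\tilde\eta_i\sim\mathcal{N}(0,\sigma_T^2)$, all mutually independent. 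Because $E$ simultaneously diagonalizes $\mathbf{X}^\top\mathbf{X}$ and $\mathbf{W}^\top\mathbf{W}$, the risk of each estimator splits into a sum of $d$ independent one-dimensional risks, each admitting a clean bias--variance decomposition, and the supremum over the constraint is then an elementary maximization over a Euclidean ball of radius $U$.

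For the source-only estimator, in the eigenbasis $(\widehat\beta_S)_i=(\beta_S)_i+\lambda_i^{-1}\tilde\epsilon_i$, so that $\mathbb{E}[((\widehat\beta_S)_i-(\beta_T)_i)^2]=((\beta_S)_i-(\beta_T)_i)^2+\sigma_S^2/\lambda_i$. Summing and observing that all $d$ bias coefficients equal one, the supremum over $\|\beta_S-\beta_T\|_2^2\leq U^2$ places the bias contribution at exactly $U^2$, giving $U^2+\sigma_S^2\sum_i\lambda_i^{-1}$. The target-only case is immediate: $(\widehat\beta_T)_i-(\beta_T)_i=\tilde\eta_i$ is pure noise, carrying no bias, so the risk equals the constant $d\sigma_T^2$ and the supremum is trivial.

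The substantive step is the pooling estimator. First I would rewrite $\widehat\theta_P=(\mathbf{X}^\top\mathbf{X}+\mathbf{W}^\top\mathbf{W})^{-1}(\mathbf{X}^\top\mathbf{Y}+\mathbf{W}^\top\mathbf{V})$ and invoke simultaneous diagonalization: from $E^\top(\mathbf{W}^\top\mathbf{W})E=I$ and $E^\top(\mathbf{X}^\top\mathbf{X})E=\mathrm{diag}(\lambda_i)$ one obtains $(\mathbf{X}^\top\mathbf{X}+\mathbf{W}^\top\mathbf{W})^{-1}=E\,(\mathrm{diag}(\lambda_i)+I)^{-1}E^\top$, whence $E^{-1}\widehat\theta_P$ has $i$-th coordinate $(\lambda_i(\beta_S)_i+(\beta_T)_i+\tilde\epsilon_i+\tilde\eta_i)/(1+\lambda_i)$. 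Subtracting $(\beta_T)_i$ leaves a bias $\tfrac{\lambda_i}{1+\lambda_i}((\beta_S)_i-(\beta_T)_i)$ and a noise variance $(\lambda_i\sigma_S^2+\sigma_T^2)/(1+\lambda_i)^2$, which immediately reproduces the two variance sums in the claim upon splitting the numerator. The remaining bias term is $\sup_{\sum_i\delta_i^2\leq U^2}\sum_i(\lambda_i/(1+\lambda_i))^2\delta_i^2$, a weighted sum of squares over the ball; unlike the source-only case, the unequal coefficients force all mass onto the single largest one, yielding $U^2\max_i(\lambda_i/(1+\lambda_i))^2$.

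The only genuine obstacle is algebraic rather than conceptual: the simultaneous-diagonalization identity for the pooled Gram matrix is precisely what makes $\widehat\theta_P$ diagonalize in the same basis $E$, decoupling its $d$ coordinates and allowing all three estimators to be treated within one coordinate system. I would verify that identity with care, since it is the linchpin that turns each worst-case bias computation into maximizing a (possibly weighted) sum of squares over a Euclidean ball.
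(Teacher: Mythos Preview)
Your proposal is correct and follows essentially the same route as the paper's own proof: both pass to the $E$-basis reparameterization, exploit the simultaneous diagonalization $E^\top(\mathbf{X}^\top\mathbf{X}+\mathbf{W}^\top\mathbf{W})E=\mathrm{diag}(1+\lambda_i)$ to decouple the pooled estimator coordinate-wise, perform the bias--variance split in each coordinate, and then maximize the weighted bias term over the Euclidean ball $\|\beta_S-\beta_T\|_2^2\leq U^2$. The only cosmetic difference is that the paper writes the pooled bias and variance in block-matrix form before reading off coordinates, whereas you work directly with the scalar sufficient statistics $(E^\top\mathbf{X}^\top\mathbf{Y})_i$ and $(E^\top\mathbf{W}^\top\mathbf{V})_i$; the computations are identical.
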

%
%

We observe that the worst-case risk~\eqref{eq:ubfdformula} attained by our proposed estimator is smaller than that of the basic approaches using only the source or target dataset. Moreover, the ordinary least squares estimate $\widehat{\theta}_T$ is minimax optimal when using only the target dataset~\citep[Theorem~6.5]{ref:hodges1950some}. Thus we show, surprisingly, that negative transfer cannot occur if we have access to the bound $U$ on the distance between the source and target parameters. 
In particular, negative transfer does not occur if there exists an estimator such that the worst-case risk of the estimator is smaller than the minimax risk of only using the samples from the target domain, noting that our model interpolation estimator satisfies this condition.
We also show in the supplement that the worst-case risk~\eqref{eq:ubfdformula} is smaller than that of the pooling method. 

\section{GENERALIZED LINEAR MODEL}
\label{sec:GLM}
Our mathematical framework and theoretical results above have been limited to the case of transfer learning with respect to
LRMs.
We next turn to consider our corresponding mathematical framework and theoretical results for the case of transfer learning within the context of a class of 
GLMs.
We
also extend our framework to allow multiple source domains.
The class of GLMs of interest is presented first, followed by our mathematical analysis that leads to theoretical results for GLMs analogous to Theorems~\ref{prop:LB:LRM} and~\ref{prop:UB:LRM}.

Suppose we have access to $M$ different source distributions. For each source $m\in [M]$, assume the $n_{S_m}$ samples $y_1^{(m)},\ldots,y_{n_{S_m}}^{(m)}$ come from the GLM density (with respect to some dominating measure $\mu_{S_m}$)
\begin{align*}
&f^{(m)}(y_i^{(m)};\theta_{S_m}) = \\ &t^{(m)}(y_i^{(m)}) \exp\left(\frac{y_i^{(m)}\langle x^{(m)}_i, \theta_{S_m}\rangle -\Psi^{(m)}(\langle x_i^{(m)},\theta_{S_m}\rangle)}{a^{(m)}(\sigma_{S_m})}\right) ,
\end{align*}
where $(x_i^{(m)})^\top$ is the $i$-th row of the design matrix $\mathbf{X}^{(m)}$, $t^{(m)}(\cdot)$ is a nonnegative-valued function defined on the response space, $a^{(m)}(\cdot)$ is a positive function of $\sigma_{S_m}$, and $\Psi^{(m)}(\cdot)$ is the log-partition function. In a similar manner for the target, assume the $n_T$ samples $v_1,\ldots,v_{n_T}$ come from the GLM density (with respect to a possibly different dominating measure $\mu_T$)
\[
g(v_i;\theta_T) = l(v_i) \exp\left(\frac{v_i\langle w_i, \theta_T\rangle -\Gamma(\langle w_i,\theta_T\rangle)}{b(\sigma_T)}\right) ,
\]
where $w_i^\top$ is the $i$-th row of the design matrix $\mathbf{W}$, $~l(\cdot)$ is analogous to $t^{(m)}(\cdot)$, $~b(\cdot)$ is analogous to $a^{(m)}(\cdot)$, and $\Gamma(\cdot)$ is analogous to $\Psi^{(m)}(\cdot)$.

Note that the above GLM, under which samples are generated according to an exponential family with natural parameter equal to a linear transformation of the underlying parameter $\theta$, was used in~\cite{ref:lee2020minimax} though not in the context of transfer learning.
We further assume that 
\[
\sup_z(\Psi^{(m)})^{''}(z)\leq C_{S_m}~\forall m, \qquad \sup_z\Gamma^{''}(z)\leq C_T .
\]

Following in a manner similar to Section~\ref{ssec:base}, we next can respectively define the loss function $\ell(\widehat{\theta},\theta_T)$ and the uncertainty set $\{D(\theta_{S_m},\theta_T)\leq U_m^2\}$ as
\begin{align*}
\ell(\widehat{\theta} , \theta_T) &= (\widehat{\theta} - \theta_T)^\top (\mathbf{W}^\top\mathbf{W}) (\widehat{\theta} - \theta_T), \\
D(\theta_{S_m},\theta_T) &= (\theta_{S_m} - \theta_T)^\top(\mathbf{W}^\top\mathbf{W}) (\theta_{S_m}-\theta_T) .
\end{align*}
This leads to the problem formulation
\begin{equation}
\inf_{\widehat{\theta}}\sup_{D(\theta_{S_m},\theta_T)\leq U_m^2, \forall m} \mathbb{E}_{P_T,P_{S_m},m\in [M]}[\ell(\widehat{\theta},\theta_T)] .
\label{eq:GLMproblem}
\end{equation}
To simplify the notation, when no confusion arises, we shall henceforth abbreviate the expectation $\mathbb{E}_{P_T,P_{S_m},m\in [M]}[\ell(\widehat{\theta},\theta_T)]$ by $\mathbb{E}[\ell(\widehat{\theta},\theta_T)]$.

Towards solving this problem formulation, consider the generalized eigenvalue problem of the matrix pencil $((\mathbf{X}^{(m)})^\top\mathbf{X}^{(m)},\mathbf{W}^\top\mathbf{W})$; refer to \cite{ref:GolubVanLoan}.  More specifically, we have
\[
(\mathbf{X}^{(m)})^\top\mathbf{X}^{(m)} e^{(m)}_i = \lambda_i^{(m)} \mathbf{W}^\top\mathbf{W} e_i^{(m)} ,
\]
where the eigenvalues are arranged such that $\lambda_1^{(m)}\geq\cdots\geq\lambda_d^{(m)}$. 
Observe that, for any $\theta\in\mathbb{R}^d$, 
\[
\theta^\top (\mathbf{X}^{(m)})^\top\mathbf{X}^{(m)}\theta \leq \lambda_1^{(m)}\theta^\top \mathbf{W}^\top\mathbf{W}\theta .
\]

We then have the following main result for a lower bound on the solution of \eqref{eq:GLMproblem} using Le Cam's and Fano's methods~\citep[Chapter~2]{ref:tsybakov2008introduction}.
\begin{theorem}
A lower bound of the minimax risk corresponding to the GLMs is given by
\begin{align*}
\inf_{\widehat{\theta}}&\sup_{D(\theta_{S_m},\theta_T)\leq U_m^2, \forall m} \mathbb{E}[\ell(\widehat{\theta},\theta_T)] \\
&\qquad\qquad \geq\frac{e^{-1}}{800} \frac{d}{\sum_{m=1}^M\frac{1}{\frac{U_m^2}{d} +\frac{a^{(m)}(\sigma_{S_m})}{C_{S_m}\lambda_1^{(m)}}}+\frac{C_T}{b(\sigma_T)}} .
\end{align*}
\label{thm:glmlb}
\end{theorem}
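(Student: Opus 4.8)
The plan is to prove the bound by the information-theoretic (Fano/Le Cam) route, after a coordinate change that makes both the loss and the uncertainty sets Euclidean. Writing $\phi=(\bW^\top\bW)^{1/2}\theta$, the loss becomes $\|\widehat\phi-\phi_T\|_2^2$ and each constraint becomes $\|\phi_{S_m}-\phi_T\|_2\le U_m$. For the exponential-family densities the per-observation KL divergence is exactly the Bregman divergence of the log-partition function divided by the dispersion; a mean-value expansion together with $\sup_z\Gamma''\le C_T$ and $\sup_z(\Psi^{(m)})''\le C_{S_m}$ gives $\mathrm{KL}(g_\theta\|g_{\theta'})\le\frac{C_T}{2b(\sigma_T)}(\theta-\theta')^\top\bW^\top\bW(\theta-\theta')$ for the target and the analogous source bound with $(\bX^{(m)})^\top\bX^{(m)}$. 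First I would use the spectral inequality already recorded in the excerpt, $(\bX^{(m)})^\top\bX^{(m)}\preceq\lambda_1^{(m)}\bW^\top\bW$, to obtain purely Euclidean information coefficients $I_T:=C_T/b(\sigma_T)$ and $I_{S_m}:=C_{S_m}\lambda_1^{(m)}/a^{(m)}(\sigma_{S_m})$, so that in $\phi$-coordinates every relevant KL is at most half the corresponding coefficient times $\|\phi-\phi'\|_2^2$. The appearance of the top generalized eigenvalue $\lambda_1^{(m)}$ (rather than the whole spectrum as in the LRM analysis) is forced here because the $M$ source pencils cannot be simultaneously diagonalized with the target.

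\textbf{Packing construction.} Next I would build the hypotheses for Fano's method. Fix a per-coordinate scale $r>0$ and take a Varshamov--Gilbert packing $\omega^{(1)},\dots,\omega^{(N)}\in\{-1,+1\}^d$ with $\log N\ge(d/8)\log 2$ and pairwise Hamming distance $\ge d/8$; set the candidate targets $\phi_T^{(j)}=r\,\omega^{(j)}$, which are $\ell_2$-separated by $\|\phi_T^{(j)}-\phi_T^{(k)}\|_2^2\ge r^2d/2$. For each source I place $\phi_{S_m}^{(j)}=\rho_m\,\phi_T^{(j)}$ with the shrinkage $\rho_m=\max\{0,\,1-\tilde U_m/r\}$ and $\tilde U_m:=U_m/\sqrt d$; since $\|\phi_{S_m}^{(j)}-\phi_T^{(j)}\|_2=(1-\rho_m)\,r\sqrt d=\min\{r\sqrt d,\,U_m\}\le U_m$, every configuration lies in the admissible set. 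Because only $\phi_T$ and the $\phi_{S_m}$ vary with $j$, the KL between hypotheses $j,k$ splits across target and sources, and using $r\rho_m=\max\{0,r-\tilde U_m\}$ it is bounded by $\mathrm{KL}_{jk}\le 2d\big(I_Tr^2+\sum_m I_{S_m}\max\{0,r-\tilde U_m\}^2\big)$.

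\textbf{Choosing the scale.} Then I would optimize $r$ against the packing budget $\log N\asymp d$. Set $r^2=\kappa/(I_T+\sum_m P_m)$ with $P_m:=\big(\tilde U_m^2+1/I_{S_m}\big)^{-1}=\big(U_m^2/d+a^{(m)}(\sigma_{S_m})/(C_{S_m}\lambda_1^{(m)})\big)^{-1}$ and $\kappa$ a small universal constant. The only nonroutine point is to control the source term, since $I_{S_m}\max\{0,r-\tilde U_m\}^2$ is \emph{not} pointwise comparable to $r^2P_m$. The resolution is to use the global choice of $r$: it forces $r^2P_m\le\kappa$ for every $m$, and on any ``active'' source ($r>\tilde U_m$) this yields $I_{S_m}\tilde U_m^2\le\kappa/(1-\kappa)$, whence $I_{S_m}\max\{0,r-\tilde U_m\}^2\le 2\,r^2P_m$ for all $m$. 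Summing gives $I_Tr^2+\sum_m I_{S_m}\max\{0,r-\tilde U_m\}^2\le 3\kappa$, so $\mathrm{KL}_{jk}\le 6\kappa d\le\tfrac12\log N$ for $\kappa$ small. Fano's inequality then lower-bounds the risk by a constant multiple of the squared separation $r^2d/2$, i.e. by a universal constant times $d/(I_T+\sum_m P_m)$, which is exactly the claimed rate; tracking the constants through the Varshamov--Gilbert count and the Fano/Bretagnolle--Huber inequalities produces the explicit factor $e^{-1}/800$.

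\textbf{Main obstacle.} I expect the crux to be precisely the source-placement step just described: one must simultaneously satisfy all $M$ uncertainty constraints while keeping each source's contribution to the mutual information no larger than its ``effective precision'' $P_m$, and the naive pointwise comparison fails for highly informative sources. The shrinkage placement together with the global-scale identity $r^2P_m\le\kappa$ is what tames this, and it is also where a complementary two-point (Le Cam) argument is natural for the uncertainty-limited regime $r\le\min_m\tilde U_m$, in which all sources decouple and the target alone drives the bound. Everything else---the Bregman/curvature KL estimates, the spectral reduction via $\lambda_1^{(m)}$, and the Fano bookkeeping---is routine.
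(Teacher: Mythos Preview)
Your proposal is correct and follows the same high-level route as the paper: a coordinate change making the loss and constraints Euclidean, the Bregman/curvature bound on the GLM KL together with the spectral majorization $(\bX^{(m)})^\top\bX^{(m)}\preceq\lambda_1^{(m)}\bW^\top\bW$, and then a Varshamov--Gilbert packing fed into Fano. Two points of comparison are worth noting.

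First, your explicit shrinkage placement $\phi_{S_m}^{(j)}=\rho_m\phi_T^{(j)}$ with $\rho_m=\max\{0,1-\tilde U_m/r\}$ is a genuinely cleaner alternative to the paper's route. The paper instead isolates the source-placement issue into a separate one-dimensional existence lemma (its Lemma~\ref{lemma:onedimconditions}), which guarantees admissible $(\beta_T^0,\beta_T^1,\beta_{S_m}^0,\beta_{S_m}^1)$ with the right KL budget coordinate by coordinate, and then tiles this across the hypercube. Your construction achieves the same thing in one line, and your ``global scale'' argument that $r^2P_m\le\kappa$ forces $I_{S_m}\tilde U_m^2\le\kappa/(1-\kappa)$ on active sources is exactly the mechanism hidden inside that lemma.

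Second, the paper does \emph{not} obtain the stated constant from Fano alone: it splits into $d\le 100$ (Le~Cam, with the free parameter $K=50$ producing $e^{-1}/(16K)=e^{-1}/800$) and $d\ge 100$ (Fano with $K=50/3$, giving the larger $3/3200$). The final constant $e^{-1}/800$ is the worse of the two and comes from the Le~Cam case. Your sketch runs Fano throughout and invokes Le~Cam only for the ``uncertainty-limited'' regime $r\le\min_m\tilde U_m$; but the real reason a two-point argument is needed is small $d$, where the Varshamov--Gilbert count $\log N\ge(d/8)\log 2$ is too small for Fano's $1-(\overline{\mathrm{KL}}+\log 2)/\log N$ to stay bounded away from zero. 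So to actually recover the constant $e^{-1}/800$ you must add a separate small-$d$ Le~Cam step (your KL bounds carry over verbatim), rather than expecting it to fall out of the Fano bookkeeping.
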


Details are given in the supplement.

\begin{remark}\label{rm:leeandcourtadelb}
For the non-transfer learning setting considered in~\cite{ref:lee2020minimax}, our proof method gives rise to a lower bound of $d\cdot b(\sigma_T)/C_T$ which is sharper than their lower bound of
\[
\max\left\{\frac{\|\Lambda_\bW\|_1^2}{\|\Lambda_\bW\|_2^2},\lambda_{\textrm{min}}(\bW^\top\bW)\|\Lambda^{-1}_\bW\|_1\right\}\cdot b(\sigma_T)/C_T,
\]
where $\Lambda_\bW$ is the vector of eigenvalues of the positive-definite matrix $\bW^\top\bW$, and $\Lambda_\bW^{-1}$ denotes its coordinate-wise inverse.
\end{remark}


Specializing Theorem~\ref{thm:glmlb} to the LRM case, we derive a multiple sources analog to Theorem~\ref{prop:LB:LRM}.
\begin{corollary}\label{cor:multiplelr}
When the GLMs considered are Gaussian 
LRMs,
then
a lower bound of the minimax risk is
\begin{align*}
\inf_{\widehat{\theta}}&\sup_{D(\theta_{S_m},\theta_T)\leq U_m^2, \forall m} \mathbb{E}[\ell(\widehat{\theta},\theta_T)] \\
&\qquad\qquad\qquad \geq\frac{e^{-1}}{800} \frac{d}{\sum_{m=1}^M\frac{1}{\frac{U_m^2}{d} +\frac{\sigma_{S_m}^2}{\lambda_1^{(m)}}}+\frac{1}{\sigma_T^2}} .
\end{align*}
\end{corollary}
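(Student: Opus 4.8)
The plan is to recognize the Gaussian LRM as a special instance of the GLM framework already analyzed in Theorem~\ref{thm:glmlb}, and then simply read off the relevant constants. All the analytic heavy lifting (Le Cam's and Fano's methods, the reduction to coordinatewise testing) is already encapsulated in that theorem, so the work here is confined to the exponential-family identification; once that is done, the corollary follows by direct substitution.

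First I would rewrite the Gaussian source density $\cN(\langle x_i^{(m)},\theta_{S_m}\rangle,\sigma_{S_m}^2)$ in the canonical form prescribed in the GLM setup. Expanding the quadratic in the exponent as $(y_i^{(m)}-\langle x_i^{(m)},\theta_{S_m}\rangle)^2 = (y_i^{(m)})^2 - 2y_i^{(m)}\langle x_i^{(m)},\theta_{S_m}\rangle + \langle x_i^{(m)},\theta_{S_m}\rangle^2$ and collecting the $(y_i^{(m)})^2$ term together with the Gaussian normalizing constant into the base-measure factor $t^{(m)}(\cdot)$, the density takes exactly the required shape with dispersion function $a^{(m)}(\sigma_{S_m})=\sigma_{S_m}^2$ and log-partition function $\Psi^{(m)}(z)=z^2/2$. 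The identical computation for the target yields $b(\sigma_T)=\sigma_T^2$ and $\Gamma(z)=z^2/2$.

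Next I would verify the curvature bounds required by Theorem~\ref{thm:glmlb}. Since $(\Psi^{(m)})''(z)=1$ and $\Gamma''(z)=1$ for every $z$, the suprema are attained with equality and we may take $C_{S_m}=1$ and $C_T=1$. I would also confirm that the loss function and the uncertainty sets $\{D(\theta_{S_m},\theta_T)\leq U_m^2\}$ stated in the GLM section coincide verbatim with those of the corollary, so that the minimax problem in question is literally the GLM problem~\eqref{eq:GLMproblem} specialized to Gaussian noise; no separate reduction argument is needed.

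Substituting $a^{(m)}(\sigma_{S_m})=\sigma_{S_m}^2$, $C_{S_m}=1$, $b(\sigma_T)=\sigma_T^2$, and $C_T=1$ into the bound of Theorem~\ref{thm:glmlb} replaces $\tfrac{a^{(m)}(\sigma_{S_m})}{C_{S_m}\lambda_1^{(m)}}$ by $\tfrac{\sigma_{S_m}^2}{\lambda_1^{(m)}}$ and $\tfrac{C_T}{b(\sigma_T)}$ by $\tfrac{1}{\sigma_T^2}$, giving precisely the stated expression. Because this is a pure specialization, I do not anticipate a genuine obstacle; the only point requiring care is the bookkeeping in the exponential-family matching, in particular confirming that the dispersion $\sigma_{S_m}^2$ lands in the numerator of the per-source term exactly as written and that the constant second derivatives make the boundedness assumption $\sup_z(\Psi^{(m)})''(z)\leq C_{S_m}$ tight rather than merely satisfiable.
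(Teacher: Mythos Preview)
Your proposal is correct and follows exactly the paper's own approach: specialize Theorem~\ref{thm:glmlb} to the Gaussian case by noting that $a^{(m)}(\sigma_{S_m})=\sigma_{S_m}^2$, $b(\sigma_T)=\sigma_T^2$, and $C_{S_m}=C_T=1$, then substitute. The paper's proof is a single sentence stating precisely these identifications; your additional exponential-family bookkeeping is sound and simply makes explicit what the paper leaves implicit.
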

In comparison to Theorem~\ref{prop:LB:LRM}, the rate in Corollary~\ref{cor:multiplelr} involves the spectral gap $\lambda^{(m)}_1 / \lambda^{(m)}_d$.

Turning to consider an upper bound
within the context of GLMs,
we assume that $\inf_z(\Psi^{(m)})^{''}(z)\geq L_{S_m},~\forall m,$ and $\inf_z\Gamma^{''}(z)\geq L_T$. 
Then, using the sub-Gaussian concentration bound for GLM noise and results of \cite{ref:bastani2021predicting}, we obtain the following upper bound of the minimax risk corresponding to the GLMs (ignoring the leading constant):
\begin{equation}\label{glmub}
 \frac{d}{\sum_{m=1}^M \frac{1}{\frac{U_m^2}{d}+\frac{2C_{S_m}a^{(m)}(\sigma_{S_m})}{L_{S_m}^2\lambda_d^{(m)}}}+\frac{1}{\frac{2C_{T}b(\sigma_{T})}{L_{T}^2}}}.
\end{equation}
The details are provided in the supplement.
Comparing \eqref{glmub} with Theorem~\ref{thm:glmlb}, we observe that our upper and lower bounds match up to the ratios $C_{S_m}/L_{S_m}$ and $C_{T}/L_T$ and the spectral gap $\lambda_1^{(m)}/\lambda_d^{(m)}$. We plan to consider the tight analysis of upper and lower bounds in the GLM setting as part of future research.

\section{SIMULATION RESULTS}
\label{sec:sim}
Our focus in this paper is on a mathematical framework of transfer learning and corresponding theoretical results related to geometric structures, minimax bounds, and minimax optimality. To provide further insights and understanding with respect to our framework and results, we now present a collection of simulation results that investigate the quantitative performance of our model interpolation estimator under various conditions, environments, and parameter settings.
These simulation results showcase the ability of our proposed estimator to outperform the basic transfer learning approach discussed in~\cite{ref:Daume07} and the recent state-of-the-art transfer learning methods of~\cite{ref:bastani2021predicting} and~\cite{ref:LiCaiLi2020}.

We consider the LRM in the transfer learning setting of a single source domain and a single target domain.
The optimal interpolation scheme~\eqref{eq:optinterpscheme} requires that we specify the parameters $U,\sigma_S$ and $\sigma_T$, which are typically unknown a priori. In Section~\ref{ssec:simmisspec}, we first qualitatively explore the behavior of our proposed method with respect to the setting of $U$ relative to its true value, while assuming perfect knowledge of the remaining parameters. Then, in Section~\ref{ssec:simcompare}, we treat all three parameters as unknown and develop heuristic estimation procedures with which we compare this full-fledged version of our proposed method against other competing methods in the research  literature.\footnote{All problems are modeled in Python and run on an Intel i5 CPU (1.4GHz) computer.}

\begin{figure}[h]
    \centering
    \includegraphics[width=\columnwidth]{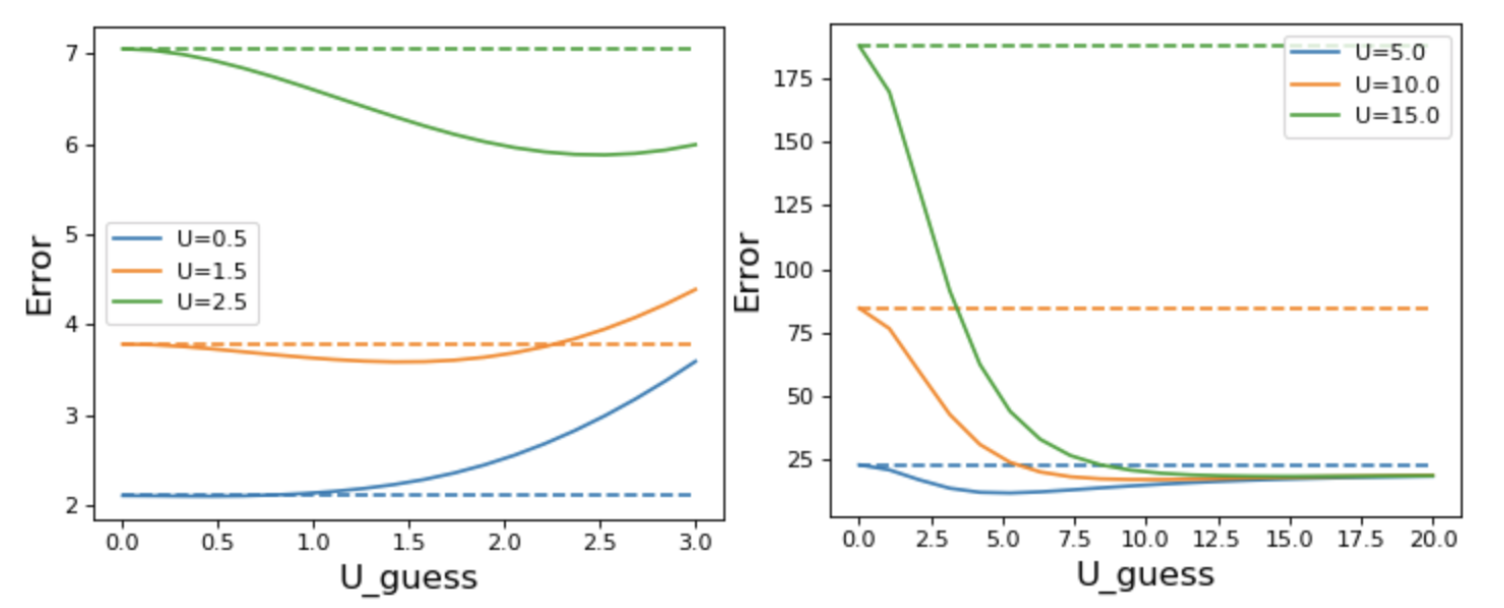}
    \caption{Simulation Results with Different Ground Truth $U$. Solid Lines Represent the Proposed Method. Dashed Lines Represent the Basic Pooling Method.}
    \label{fig:misbaseandlargeu}
\end{figure}




\begin{table*}[tb]
\caption{Simulation Results Comparing the Proposed Method to Other Competing Methods in Moderate-Dimensions (Left-Half) and in High-Dimensions (Right-Half).
``Basic'' Represents the Lowest of the Errors Attained by the Three Basic Methods in Section~\ref{ssec:compare}. Numbers in Parentheses Are Standard Deviations.}
\label{tab:comparisonboth}
\begin{center}
\begin{tabular}{|c|c|c|c|c||c|c|c|c|}
\hline
$U$ &Basic&Proposed&Two-Step&Trans-Lasso &Basic&Proposed&Two-Step&Trans-Lasso \\
\hline
0.5 & 1.9(0.6) & 2.3(1.3) & 3.0(2.5) & \textbf{0.9}(1.8) & \textbf{9.1}(1.7) & 9.3(2.1) & 11.8(3.9) & 9.6(3.0)\\
1.5 & 3.9(1.1) & 4.3(1.5) & 5.7(3.2) & \textbf{2.8}(2.9) & 11.4(2.0) & \textbf{11.4}(2.3) & 14.5(4.6) & 12.2(2.9)\\
2.5 & 6.9(1.5) & 6.5(1.7) & 7.9(2.6) & \textbf{5.3}(2.3) & 15.2(1.9) & \textbf{14.6}(2.5) & 18.1(3.8) & 16.2(2.3)\\
3.5 & 11.7(1.9) & \textbf{9.2}(2.5) & 11.9(3.5) & 9.5(2.9) & 20.5(2.5) & \textbf{18.6}(2.7) & 23.8(3.4) & 21.7(2.8)\\
4.5 & 18.6(2.7) & \textbf{12.4}(4.1) & 15.4(4.9) & 13.9(4.0) & 27.6(2.3) & \textbf{23.6}(3.9) & 31.2(3.1) & 29.2(2.9)\\

5.5 & 20.3(6.1) & \textbf{14.8}(5.3) & 17.9(5.9) & 18.0(5.3) & 36.9(3.1) & \textbf{29.4}(4.6) & 41.3(3.9) & 38.3(3.4)\\
6.5 & 20.9(6.6) & \textbf{15.8}(5.5) & 18.4(6.6) & 19.2(7.3) & 46.1(3.2) & \textbf{33.7}(5.0) & 42.1(4.8) & 47.9(3.7)\\
7.5 & 19.3(5.8) & \textbf{16.2}(4.8) & 18.8(7.8) & 19.6(8.2) & 54.6(12.5) & \textbf{40.7}(6.6) & 65.6(5.9) & 60.8(5.2)\\
8.5 & 20.8(6.6) & \textbf{18.8}(6.6) & 20.8(8.5) & 21.4(8.8) & 53.8(10.5) & \textbf{43.6}(7.2) & 80.9(6.3) & 73.5(4.8)\\
9.5 & 20.4(6.6) & \textbf{19.1}(6.5) & 19.6(7.5) & 20.1(8.0) & 51.7(11.3) & \textbf{48.3}(9.6) & 97.7(9.1) & 88.6(6.2)\\
\hline
\end{tabular}
\end{center}
\end{table*}

\subsection{Misspecification of $U$}
\label{ssec:simmisspec}
For our investigation of the impact of misspecifications of $U$, the baseline parameters are set to be $d=20,n_S=1000,n_T=100,\sigma_S^2=\sigma_T^2 = 1$. We randomly generate $\mathbf{X}$ where each row independently follows the standard multivariate Gaussian, and also independently generate $\mathbf{W}$ in a similar fashion. We consider $\beta_T$, the target parameter after the coordinate transformation~\eqref{eq:coordinatetransform}, as the vector of all ones. Note that specifying $\theta_T$ and $\beta_T$ is equivalent given the design matrices $\mathbf{X}$ and $\mathbf{W}$. Experiments are then performed with ground-truth values of $U \in \{0.5,1.5,2.5\}$ where, for each value of $U$, we generate $\beta_S$ to be the Nash equilibrium in problem~\eqref{eq:ubfdproblem}. We then repeat $1000$ independent simulation runs where, in each run, the design matrices $\mathbf{X},\mathbf{W}$ are kept unchanged and fresh copies of the response vectors $\mathbf{Y},\mathbf{V}$ are resampled following the
LRM~\eqref{eq:sourcemodelaftercoordinatetrans}
and~\eqref{eq:targetmodelaftercoordinatetrans}.

In calculating the optimal interpolation scheme, we assume $\sigma_S^2,\sigma_T^2$ to be known and experiment with $U_{\textrm{guess}}$ values that are equispaced within the interval $[0,3]$ as (mis)specifications of $U$. Then, we compute the average of the estimation error for $\theta_T$ in~\eqref{eq:lossfunction} over the $1000$ runs and plot the corresponding trends with respect to $U_{\textrm{guess}}$. The results are summarized in the left plot of Figure~\ref{fig:misbaseandlargeu}, where the horizontal lines are the averaged error of the basic pooling method \eqref{eq:pooling-estimator} included for comparison.
We observe that the performance gap between the proposed method (in solid lines) and the basic pooling method (in dashed lines) increases with the ground-truth value of $U$. While this gap is highest when the proposed method uses the correct $U$ value, it is robust to misspecification in $U$. 

We now vary the parameter values and observe that similar phenomena exist across the different settings:

\mbox{$\bullet$  }The ground-truth value of $U$ is changed to $\{5,10,15\}$, and we experiment with $U_{\textrm{guess}}$ values equispaced within $[0,20]$. The results are summarized in the right plot of Figure~\ref{fig:misbaseandlargeu}. 
Our proposed method works much better than the basic pooling method across the wider range of $U$ values around the ground-truth. 


\mbox{$\bullet$  }The rows of $\mathbf{X}$ are generated independently by a zero-mean Gaussian with a Toeplitz covariance matrix~\citep[Section~5.2]{ref:LiCaiLi2020},
or both $\mathbf{X}$ and $\mathbf{W}$ are generated in this way;
refer to the supplement and Figure~\ref{fig:mistoeplitz}.
The introduction of correlation 
does not impact the performance of either method when compared to the uncorrelated case in the left plot of Figure~\ref{fig:misbaseandlargeu}.


\mbox{$\bullet$  }The noise variances are changed to $\sigma_S^2=1,\sigma_T^2=5$, or $\sigma_S^2=5,\sigma_T^2 =1$; refer to the supplement and Figure~\ref{fig:misunequalvar}. The proposed method handles large variances in the source data much better than the basic pooling method, while high variance in (smaller sized) target data has no significant impact on either method.


\mbox{$\bullet$  }The dimension $d$ is changed to $5$, or $d=100$ but with $\beta_T$ three-sparse
(specifically, 
the first three elements of $\beta_T$ are one and the rest are zero);
refer to the supplement and Figure~\ref{fig:misdimchange}.
Lower dimensionality seems to improve the performance gap between the two methods as compared to the left plot of Figure~\ref{fig:misbaseandlargeu}, while this gap shortens in high-dimensions with extreme sparsity.

\begin{table*}[h]
\caption{Results Comparing the Proposed Method to Other Competing Methods on Uber$\&$Lyft Data.} \label{tab:comparisonrealdata}
\begin{center}
\begin{tabular}{|c|c|c|c|c|c|c|}
\hline
$n_S$&$n_T$&$\widehat U$&Basic&Proposed&Two-Step&Trans-Lasso \\
\hline
1000 & 100 & 19.79 & 34.54(15.10) (pooling) & \textbf{28.50}(16.53) & 1.16(2.03)$\times 10^5$ & 93.80(57.22)  \\
10000 & 1000 & 19.20 & 38.52(14.72) (target) & \textbf{36.30}(14.32) & 1.62(2.35)$\times10^5$ & 778.47(538.76) \\
1000 & 1000 & 15.05 & 37.17(10.66) (target) & \textbf{36.04}(11.08) & 8.84(13.23)$\times10^5$ & 353.49(347.70) \\
100 & 10000 & 47.97 & 30.72(9.35) (target) & \textbf{30.50}(9.35) & 1.08(15.94)$\times10^6$  & 47.59(44.61) \\
10000 & 100 & 9.29 & 29.86(6.37) (pooling) & \textbf{26.47}(8.10) & 1.35(2.46)$\times10^4$ & 101.31(67.10) \\
\hline
\end{tabular}
\end{center}
\end{table*}

\subsection{Comparisons with Competing Methods}
\label{ssec:simcompare}
We have seen from Figures~\ref{fig:misbaseandlargeu}~--~\ref{fig:misdimchange} that our method admits a broad tolerance to misspecifications of the value of $U$ relative to its true value, especially if the true value is moderately large. We next develop heuristic procedures for estimating $U$, alongside with $\sigma_S^2$ and $\sigma_T^2$, from the datasets as follows. 

\mbox{$\bullet$  }We use the usual least squares MLE estimate
    \[
    \widehat{\sigma_S^2} = \frac{1}{n_S}\sum_{i=1}^{n_S} \left(y_i - x_i^\top\widehat{\theta}_S\right)^2 ,
    \]
    with $\widehat{\theta}_S$ the ordinary least squares estimate of $\theta_S$ in~\eqref{eq:sourcelinearregmodel}.
    
\mbox{$\bullet$  }For moderate dimension and $\theta_T$ not sparse, we use a similar least squares estimate $\widehat{\sigma_T^2}$. 
    However, in high-dimensional settings, it has been observed that a more accurate estimator is given by~\citep{ref:reid2016study}
    \[
     \widehat{\sigma_T^2} = \frac{1}{n_T-\hat{s}_{\hat\gamma}}\sum_{i=1}^{n_T} \left(v_i - w_i^\top\widehat{\theta}_{T,\hat{\gamma}}\right)^2 ,
    \]
    where $\widehat{\theta}_{T,\hat\gamma}$ is the Lasso estimator with cross-validated penalization parameter $\hat\gamma$, and $\hat{s}_{\hat\gamma}$ is the number of non-zero elements in $\widehat{\theta}_{T,\hat\gamma}$. 
    
\mbox{$\bullet$  }We use a $5$-fold cross-validation
(CV)
    procedure to determine an estimate $\widehat{U}$, where the
CV
    objective is the mean-squared test error on the hold-out set. 

The experimental results in Section~\ref{ssec:simmisspec} demonstrate forms of robustness with respect to the misspecification of $U$ in our approach.
Beyond the above $5$-fold CV approach to estimate $U$ from the datasets, which delivers promising results below in comparison with state-of-the-art methods, we can use subsampling methods as an alternative to estimate $U$ whenever the source and target samples are not too scarce. 
We plan to address this issue in more detail as part of future work.

Now we compare the results from our full-fledged method with those from the basic methods discussed in Section~\ref{ssec:compare} and two recent state-of-the-art transfer learning methods in the literature, namely the two-step joint estimator proposed by~\cite{ref:bastani2021predicting} and its extension to Trans-Lasso by~\cite{ref:LiCaiLi2020}.
For the latter case, since the setting is a single source domain from which learning is transferred to a target domain, we only include for comparison the Oracle Trans-Lasso algorithm in~\cite{ref:LiCaiLi2020} (i.e., their Algorithm 1).

\subsubsection{Comparisons in Moderate-Dimensions}
\label{sec:moderatecomparison}
The parameters for the case of moderate dimensions are set to be $d=20,n_S=1000,n_T=100,\sigma_S^2=\sigma_T^2=1$. We randomly generate $\mathbf{X}$ where each row independently follows the standard multivariate Gaussian, and independently generate $\mathbf{W}$ in a similar fashion. We consider $\theta_T$ to be the vector of all ones.
Experiments are performed with ground-truth values of $U$ in $\{0.5,1.5,\ldots,9.5\}$ where, for each value of $U$, we generate $\theta_S$ to be the Nash equilibrium in problem~\eqref{eq:ubfdproblem}. We then repeat $1000$ independent simulation runs where, in each run, the design matrices $\mathbf{X},\mathbf{W}$ are kept unchanged and fresh copies of the response vectors $\mathbf{Y},\mathbf{V}$ are resampled following the
LRM~\eqref{eq:sourcelinearregmodel}
and~\eqref{eq:targetlinearregmodel}. For the methods under consideration, we report the average estimation error of $\theta_T$ in~\eqref{eq:lossfunction}, and its standard deviation, over the $1000$ runs. The results are summarized in the left-half of Table~\ref{tab:comparisonboth}.
For small $U$, the Trans-Lasso method produces the best results.
We note that these minimax optimality results of~\cite{ref:LiCaiLi2020} are established under different
assumptions, and that their results do not contradict our minimax optimality results.
For moderate to larger $U$, our proposed method attains better performance on average. We also provide additional experiments with $n_S=200,n_T=100$ in the supplement,
which exhibit consistent behaviors.


\subsubsection{Comparisons in High-Dimensions}
\label{ssec:highcomparison}
With all other parameters remaining the same, we now consider a more challenging high-dimensional setting where $d=100$. Moreover, we set $\theta_T$ to be a sparse vector where the first $20$ elements are one, and the remaining $80$ elements are zero. To deal with this high-dimensional setup, we make a simple heuristic modification to the proposed interpolation scheme~\eqref{eq:interpolationscheme} by replacing the least squares estimator $\widehat{\beta}_T$ with $\widehat{\beta}_{T,\hat\gamma}$, i.e., the Lasso estimator $\widehat{\theta}_{T,\hat\gamma}$ after the coordinate transformation~\eqref{eq:coordinatetransform}. We repeat $1000$ independent simulation runs, and report the average estimation error of $\theta_T$ in~\eqref{eq:lossfunction} and its standard deviation for the methods under consideration. The results are summarized in the right-half of Table~\ref{tab:comparisonboth}.
Our proposed method outperforms the other two methods for all $U$ values considered, somewhat surprisingly even for small $U$ since the competing methods were designed to exploit sparsity.
The supplement provides additional experiments with $n_S=200,n_T=100$
that exhibit consistent behaviors.

\subsubsection{Comparisons on Real-World Dataset}
\label{ssec:realdatacomparison}
Lastly, we compare the results from the different methods using the Uber$\&$Lyft dataset\footnote{https://www.kaggle.com/brllrb/uber-and-lyft-dataset-boston-ma} of Uber and Lyft cab
rides
collected in Boston, MA. The learning problem comprises prediction of the price using $d=32$ numerical features, including hour-of-the-day, distance, weather, and demand factors. We consider UberX as the source model and standard Lyft service as the target. The entire dataset consists of 55094 observations for the source and 51235 observations for the target, from which we compute the ground truth regression parameters; see the supplement. Since we wish to study the benefit of transfer learning, we restrict ourselves to small random subsamples. We repeat $100$ independent experiments and summarize the results in Table~\ref{tab:comparisonrealdata}. Our proposed method attains a better performance on average, by a small margin relative to the basic methods and by a large margin relative to the two-step estimator and Trans-Lasso. Notice that for this problem 
$l_q$ sparsity, $q\in[0,1]$,
(required by the last two methods) does not reasonably capture the contrast between the source and target models, due to the moderate dimensions and the existence of one dominating feature; see Table~\ref{tab:comparisonrealdatadifferentnorms} and Figure~\ref{fig:targetvssourceregestimation} in the supplement.

\clearpage
\subsubsection*{Acknowledgements}
J. Blanchet gratefully acknowledges support from the NSF via grant DMS-EPSRC 2118199 and AFOSR, as well as NSF-DMS 1915967 and AFOSR MURI FA9550-20-1-0397.

Part of this work was done while X.\ Zhang was at the IBM Thomas J.\ Watson Research Center.

\bibliography{ref}
\bibliographystyle{plainnat}


\clearpage
\appendix

\thispagestyle{empty}

\onecolumn \makesupplementtitle

%
In support of the main body of the paper, this supplement contains additional results, technical details, and complete proofs of our theoretical results.
We start by presenting proofs and related results for Theorem~\ref{prop:UB:LRM} and Theorem~\ref{prop:LB:LRM} in Sections~\ref{apdx:prop:UB:LRM} and \ref{apdx:prop:LB:LRM}, respectively.
We then present the proofs of Remark~\ref{rm:informtheory} and Proposition~\ref{prop:basicapproches} in Sections~\ref{apdx:rm:informtheory} and \ref{apdx:prop:basicapproches}, respectively.
Next, we present the proofs of Theorem~\ref{thm:glmlb}, Remark~\ref{rm:leeandcourtadelb} and Corollary~\ref{cor:multiplelr} in Sections~\ref{apdx:thm:glmlb}, \ref{apdx:rm:leeandcourtadelb} and \ref{apdx:cor:multiplelr}, respectively.
Each of these sections includes statements of the theoretical results from the main body of the paper in an effort to make the supplement self-contained. We also provide auxiliary results on the GLM upper bound in Section~\ref{apx:glmupperboundproof} and on the comparison with~\cite{ref:Kalan2020} in Section~\ref{apx:kalancomparison}.
Finally, in
Section~\ref{apdx:results},
we present an additional set of simulation results together with additional details and results for the application of a real-world dataset that complement those in the main body of the paper.

\section{Proofs and Related Results}

\subsection{Proof of Theorem~\ref{prop:UB:LRM}}
\label{apdx:prop:UB:LRM}
\textbf{Theorem~\ref{prop:UB:LRM}.}
\textit{
Under Assumption~\ref{a:pdw}, an upper bound $B$ is given by 
\begin{align}
\inf_{t_1,\ldots,t_d}\sup_{D(\theta_S,\theta_T)\leq U^2} &\mathbb{E}_{P_S,P_T}[\ell(\widehat{\theta}_{t_1,\ldots,t_d},\theta_T)]\tag{\ref{eq:ubfdproblem}}\\
&\qquad =   \sum_{i=1}^d \frac{1}{\frac{1}{\sigma_T^2} + \frac{1}{\alpha_i^\star U^2+  \frac{\sigma_S^2}{\lambda_i}}},\tag{\ref{eq:ubfdformula}}
\end{align}
where 
\[
\alpha_i^\star = \begin{cases}
 \sum_{j=i}^{K^\star}\kappa_j + \frac{1}{K^\star+1} (1-\sum_{j=1}^{K^\star}j\kappa_j) & \textrm{if } i\leq K^\star+1,\\
  0 & \textrm{if } i > K^\star + 1,
\end{cases}
\]
and 
\[
K^\star =\max_{\sum_{i=1}^K i\kappa_i\leq1,0\leq K\leq d-1}K,
\]
with
\[
\kappa_i = \frac{\sigma_S^2}{U^2}(\frac{1}{\lambda_{i+1}}-\frac{1}{\lambda_i}), \qquad i= 1,\ldots,d-1 .
\]
Moreover, the optimal estimator $\widehat{\theta}_{t_1^\star,\ldots,t_d^\star}$ satisfies 
\begin{equation}\tag{\ref{eq:optinterpscheme}}
t_i^\star = \frac{\sigma_T^2}{\sigma_T^2 + \alpha_i^\star U^2+  \frac{\sigma_S^2}{\lambda_i}}.
\end{equation}
}
%
\begin{proof}
It is more convenient to work with the reparametrization~\eqref{eq:reparameterization}. Note that
\begin{align*}
\widehat{\beta}_S &\; = \; (E^\top\mathbf{X}^\top\mathbf{X} E)^{-1} E^\top\mathbf{X}^\top \mathbf{Y} \; \sim \; \mathcal{N}(\beta_S,\sigma_S^2 \mathrm{diag}(\lambda_1^{-1},\ldots,\lambda_d^{-1})), \\
\widehat{\beta}_T &\; = \; (E^\top\mathbf{W}^\top\mathbf{W} E)^{-1} E^\top\mathbf{W}^\top \mathbf{V} \; \sim \; \mathcal{N}(\beta_T,\sigma_T^2 I).
\end{align*}
We therefore obtain
\begin{align*}
\inf_{t_1,\ldots,t_d} \sup_{\tilde D(\beta_S,\beta_T)\leq U^2} & \mathbb{E}_{P_S,P_T}[\|\widehat{\beta}-\beta_T\|_2^2] \\
  & =  \inf_{t_1,\ldots,t_d}\sup_{\|\beta_S-\beta_T\|_2^2\leq U^2} \mathrm{Tr}\left(\mathrm{diag}(t_1^2,\ldots,t_d^2)\sigma_S^2\mathrm{diag}(\lambda_1^{-1},\ldots,\lambda_d^{-1})\right) + \sum_{i=1}^d t_i^2 ((\beta_S)_i-(\beta_T)_i)^2 \\
   & \qquad\qquad\qquad + \mathrm{Tr}\left(\mathrm{diag}((1-t_1)^2,\ldots, (1-t_d)^2)\sigma_T^2\right)\\
   & = \inf_{t_1,\ldots,t_d}\sup_{\alpha_i\geq 0,\sum_{i=1}^d\alpha_i=1}\sum_{i=1}^d t_i^2\left(\frac{\sigma_S^2}{\lambda_i} + \alpha_i U^2\right) + (1-t_i)^2 \sigma_T^2\\
   & = \sup_{\alpha_i\geq 0,\sum_{i=1}^d\alpha_i=1}\inf_{t_1,\ldots,t_d}\sum_{i=1}^d t_i^2\left(\frac{\sigma_S^2}{\lambda_i} + \alpha_i U^2\right) + (1-t_i)^2 \sigma_T^2\\
   & = \sup_{\alpha_i\geq 0,\sum_{i=1}^d\alpha_i=1} \sum_{i=1}^d \frac{1}{\frac{1}{\sigma_T^2} + \frac{1}{\alpha_iU^2 + \frac{\sigma_S^2}{\lambda_i}}}\\
   & = d \sigma_T^2 - \sigma_T^4\left(\inf_{\alpha_i\geq 0,\sum_{i=1}^d\alpha_i=1} \sum_{i=1}^d\frac{1}{\alpha_i U^2 + \sigma_T^2 +\frac{\sigma_S^2}{\lambda_i}}\right),
\end{align*}
where Sion's minimax theorem~\citep{ref:sion1958minimax} is employed to swap the supremum and infinum. Further note that
\[
\sigma_T^2+\frac{\sigma_S^2}{\lambda_1}\leq\cdots\leq \sigma_T^2+\frac{\sigma_S^2}{\lambda_d}.
\]
Let
\[
\kappa_i = \frac{\sigma_S^2}{U^2}(\frac{1}{\lambda_{i+1}}-\frac{1}{\lambda_i}), \qquad i= 1,\ldots,d-1,
\]
and
\[
K^\star =\max_{\sum_{i=1}^K i\kappa_i\leq1,0\leq K\leq d-1}K.
\]
It is then easy to see that the solution of 
\[
\inf_{\alpha_i\geq 0,\sum_{i=1}^d\alpha_i=1} \sum_{i=1}^d\frac{1}{\alpha_i U^2 + \sigma_T^2 +\frac{\sigma_S^2}{\lambda_i}}
\]
is given by
\[
\alpha_i^\star = \begin{cases}
 \sum_{j=i}^{K^\star}\kappa_j + \frac{1}{K^\star+1} (1-\sum_{j=1}^{K^\star}j\kappa_j) & \textrm{ if }i\leq K^\star+1,\\
  0 & \textrm{ if }i > K^\star + 1.
\end{cases}
\]
Hence, we have
\[
\inf_{t_1,\ldots,t_d}\sup_{\tilde D(\beta_S,\beta_T)\leq U^2}\mathbb{E}_{P_S,P_T}[\|\widehat{\beta}-\beta_T\|_2^2]=  \sum_{i=1}^d \frac{1}{\frac{1}{\sigma_T^2} + \frac{1}{\alpha_i^\star U^2+  \frac{\sigma_S^2}{\lambda_i}}},
\]
and
\[
t_i^\star = \frac{\frac{1}{\alpha_i^\star U^2 + \frac{\sigma_S^2}{\lambda_i}}}{\frac{1}{\sigma_T^2} + \frac{1}{\alpha_i^\star U^2 + \frac{\sigma_S^2}{\lambda_i}}} = \frac{\sigma_T^2}{\sigma_T^2 + \alpha_i^\star U^2+  \frac{\sigma_S^2}{\lambda_i}},\qquad i=1,\ldots,d,
\]
thus completing the proof.
\end{proof}

\subsection{Proof of Theorem~\ref{prop:LB:LRM}}
\label{apdx:prop:LB:LRM}
In order to obtain our lower bound results, we shall make use of the following lemma which concerns the admissibility of the hypotheses in applying Le Cam's or Fano's method~\citep[Chapter~2]{ref:tsybakov2008introduction}. 
\begin{lemma}\label{lemma:onedimconditions}
For any $\gamma_i>0$, $U_i>0,i=0,\ldots,M,$ and $K\geq \frac{1}{4}$, there exists $\beta_i^{(j)}\in\mathbb{R},i=0,\ldots,M,j=0,1,$ such that the following conditions hold simultaneously:
\begin{enumerate}
    \item $\left(\beta_0^{(0)} - \beta_0^{(1)}\right)^2 = \frac{1}{K}\frac{1}{\gamma_0+\sum_{k=1}^M\frac{1}{U_k^2 + \frac{1}{\gamma_k}}};$
    \item $\left|\beta_0^{(j)}-\beta_i^{(j)}\right|\leq U_i, \qquad i=1,\ldots,M,j=0,1;$
    \item $\left(\beta_i^{(0)}-\beta_i^{(1)}\right)^2\leq \frac{1}{K}\frac{\frac{1}{U_i^2+\frac{1}{\gamma_i}}\frac{1}{\gamma_i}}{\gamma_0+\sum_{k=1}^M\frac{1}{U^2_k+\frac{1}{\gamma_k}}}, \qquad i=1,\ldots,M$.
\end{enumerate}
Under these conditions, we have
\[
\sum_{i=0}^M \gamma_i\left(\beta_i^{(0)}-\beta_i^{(1)}\right)^2\leq \frac{1}{K}.
\]
\end{lemma}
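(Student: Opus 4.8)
The plan is to split the lemma into two independent pieces: a one-line algebraic reduction showing that the final inequality is an automatic consequence of Conditions 1 and 3, and an existence construction realizing Conditions 1--3 simultaneously, in which Condition 2 (the uncertainty-region constraint) is the only genuinely binding requirement.

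First I would dispatch the conclusion assuming the three conditions hold. Writing $S := \gamma_0 + \sum_{k=1}^M (U_k^2 + \gamma_k^{-1})^{-1}$, Condition 1 gives $\gamma_0(\beta_0^{(0)}-\beta_0^{(1)})^2 = \gamma_0/(KS)$, and Condition 3 gives $\gamma_i(\beta_i^{(0)}-\beta_i^{(1)})^2 \le (KS)^{-1}(U_i^2+\gamma_i^{-1})^{-1}$ for $i \ge 1$. Summing over $i = 0,\ldots,M$ and using the identity $\gamma_0 + \sum_{i=1}^M (U_i^2+\gamma_i^{-1})^{-1} = S$ collapses the total to exactly $\tfrac{1}{KS}\cdot S = 1/K$. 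Note that only the \emph{inequalities} in Condition 3 are used, so no delicate tightness is required here.

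The substance is therefore the construction. I would fix the target gap symmetrically, $\beta_0^{(0)} = \tfrac12\Delta_0$ and $\beta_0^{(1)} = -\tfrac12\Delta_0$ with $\Delta_0 = (KS)^{-1/2}$, so that Condition 1 holds by design. For each source $i \ge 1$ I would then write $\beta_i^{(j)} = \beta_0^{(j)} + \delta_i^{(j)}$ and impose $|\delta_i^{(j)}| \le U_i$, which is precisely Condition 2. Setting $\Delta_i := \beta_i^{(0)}-\beta_i^{(1)} = \Delta_0 + (\delta_i^{(0)}-\delta_i^{(1)})$, the reachable values of $\Delta_i$ fill the interval $[\Delta_0 - 2U_i,\ \Delta_0 + 2U_i]$ as the $\delta$'s vary, so Condition 3 can be satisfied exactly when this interval meets $[-\Delta_0\sqrt{\rho_i},\ \Delta_0\sqrt{\rho_i}]$, where $\rho_i := \gamma_i^{-1}/(U_i^2+\gamma_i^{-1}) = 1/(1+\gamma_i U_i^2)$. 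An elementary endpoint comparison reduces this nonemptiness to the single inequality $\Delta_0(1-\sqrt{\rho_i}) \le 2U_i$.

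The hard part will be verifying this last inequality, and it is the unique place where the hypothesis $K \ge \tfrac14$ and the precise form of $S$ enter. I would drop all but the $i$-th term to get $S \ge (U_i^2+\gamma_i^{-1})^{-1} = \gamma_i/(1+x)$ with $x := \gamma_i U_i^2$, hence $\Delta_0 \le K^{-1/2}\sqrt{(1+x)/\gamma_i}$. Substituting this bound together with $1-\sqrt{\rho_i} = (\sqrt{1+x}-1)/\sqrt{1+x}$ and $2U_i = 2\sqrt{x/\gamma_i}$, the target inequality becomes $\sqrt{1+x}-1 \le 2\sqrt{K}\,\sqrt{x}$. Since $2\sqrt{K} \ge 1$ under the hypothesis, it suffices to invoke the elementary estimate $\sqrt{1+x} \le 1+\sqrt{x}$ (immediate upon squaring). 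Picking any admissible $\delta_i^{(0)},\delta_i^{(1)}$ landing in the (now nonempty) intersection then yields valid $\beta_i^{(j)}$ meeting all three conditions, and the reduction from the first paragraph closes the argument.
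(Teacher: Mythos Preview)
Your proposal is correct and follows essentially the same route as the paper. Both arguments reduce the existence question to the single scalar inequality $\Delta_0(1-\sqrt{\rho_i})\le 2U_i$ (the paper reaches it via two applications of the triangle inequality, you via an interval-intersection argument, which is the same thing), then both verify it by dropping the extraneous terms in $S$ and invoking $K\ge 1/4$; your substitution $x=\gamma_iU_i^2$ and the estimate $\sqrt{1+x}\le 1+\sqrt{x}$ are just a clean repackaging of the paper's final step $U_i/(\sqrt{U_i^2+\gamma_i^{-1}}+\sqrt{\gamma_i^{-1}})\le 1$.
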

\begin{proof}
For the three conditions to hold simultaneously, it suffices to show (using the triangle inequality twice to bound condition $2$ in terms of conditions $1$ and $3$) that, for $i=1,\ldots,M$,
\[
\sqrt{\frac{1}{\gamma_0+\sum_{k=1}^M\frac{1}{U_k^2 + \frac{1}{\gamma_k}}}} - \sqrt{\frac{\frac{1}{U_i^2+\frac{1}{\gamma_i}}\frac{1}{\gamma_i}}{\gamma_0+\sum_{k=1}^M\frac{1}{U^2_k+\frac{1}{\gamma_k}}}}\leq 2\sqrt{K} U_i.
\]
This problem is equivalent to showing that
\[
1 - \sqrt{\frac{1}{\gamma_i}\frac{1}{U_i^2 +\frac{1}{\gamma_i}}}\leq 2\sqrt{K} U_i \sqrt{\gamma_0+\sum_{k=1}^M\frac{1}{U_k^2+\frac{1}{\gamma_k}}}.
\]
In fact, we can show a tighter bound
\[
1 - \sqrt{\frac{1}{\gamma_i}\frac{1}{U_i^2 +\frac{1}{\gamma_i}}}\leq 2\sqrt{K} U_i \sqrt{\gamma_0+\frac{1}{U_i^2+\frac{1}{\gamma_i}}},
\]
which can be rewritten as
\[
\sqrt{U_i^2 +\frac{1}{\gamma_i}}-\sqrt{\frac{1}{\gamma_i}}\leq 2\sqrt{K}U_i\sqrt{1+\gamma_0(U_i^2 +\frac{1}{\gamma_i})},
\]
or further rewritten as
\begin{equation}\label{eq:lemmasimple}
\frac{U_i^2}{\sqrt{U_i^2+\frac{1}{\gamma_i}}+\sqrt{\frac{1}{\gamma_i}}}\leq 2\sqrt{K}U_i\sqrt{1+\gamma_0(U_i^2 +\frac{1}{\gamma_i})}.
\end{equation}
Since $K\geq\frac{1}{4}$ and 
\[
\frac{U_i}{\sqrt{U_i^2+\frac{1}{\gamma_i}}+\sqrt{\frac{1}{\gamma_i}}}\leq1,
\]
we conclude that the desired result~\eqref{eq:lemmasimple} holds. The last part of the claim in Lemma~\ref{lemma:onedimconditions} follows easily.
\end{proof}

We now are in a position to prove Theorem~\ref{prop:LB:LRM} via Le Cam's method~\citep[Chapter~2]{ref:tsybakov2008introduction}, which we restate as follows.

\textbf{Theorem~\ref{prop:LB:LRM}.}
\textit{
Under Assumption~\ref{a:pdw}, a lower bound $L$ is given by 
\begin{equation}
\inf_{\widehat{\theta}}\sup_{D(\theta_S,\theta_T)\leq U^2}\mathbb{E}_{P_S,P_T}[\ell(\widehat\theta,\theta_T)]
\geq\frac{\exp{\left(-\frac{1}{2}\right)}}{16}\sum_{i=1}^d \frac{1}{\frac{1}{\sigma_T^2} + \frac{1}{\alpha_i^\star U^2+  \frac{\sigma_S^2}{\lambda_i}}}.\tag{\ref{eq:lbformulalinearregression}}
\end{equation}
}
%
\begin{proof}
It is more convenient to work with the reparametrization~\eqref{eq:reparameterization}. Note that
\begin{align*}
    \inf_{\widehat{\beta}}\sup_{\tilde D(\beta_S,\beta_T)\leq U^2}\mathbb{E}_{P_S,P_T}[\tilde\ell(\widehat\beta,\beta_T)]& \geq \inf_{\widehat{\beta}}\sup_{((\beta_S)_i-(\beta_T)_i)^2\leq \alpha_i^\star U^2\forall i}\mathbb{E}_{P_S,P_T}[\tilde\ell(\widehat\beta,\beta_T)]\\
    &\geq \sum_{i=1}^d \inf_{\widehat{\beta}_i}\sup_{((\beta_S)_i-(\beta_T)_i)^2\leq \alpha_i^\star U^2}\mathbb{E}_{P_S,P_T}[(\widehat\beta_i-(\beta_T)_i)^2].
\end{align*}
Consider the singular value decomposition of $\mathbf{W}E$, for which we obtain
\[
\mathbf{W} E = PD Q = P \begin{pmatrix} Q\\ 0 \end{pmatrix},
\]
since $E^\top\mathbf{W}^\top\mathbf{W}E = I$ and where $P,Q$ are orthogonal matrices of appropriate dimensions. We therefore have
\[
\begin{pmatrix} E^\top \mathbf{W}^\top \\ (0\,\, I) P^\top\end{pmatrix} \mathbf{W} E = \begin{pmatrix} Q^\top & 0\\
0 & I\end{pmatrix} P^\top  \mathbf{W} E = \begin{pmatrix} I \\0\end{pmatrix},
\]
and thus
\begin{equation}\label{eq:tildev}
\tilde{\mathbf{V}} = \begin{pmatrix} E^\top \mathbf{W}^\top \\ (0\,\, I) P^\top\end{pmatrix}\mathbf{V} =  \begin{pmatrix} I \\0\end{pmatrix}\beta_T + \tilde\eta,\quad \tilde\eta\sim\mathcal{N}\left(0,\sigma_T^2\begin{pmatrix} I & 0 \\ 0 & \Gamma\end{pmatrix}\right),
\end{equation}
where $\Gamma$ is some positive-definite matrix that does not concern us in the following calculations. Similarly, considering the singular value decomposition of $\mathbf{X}E$, we obtain
\[
\mathbf{X}E = \tilde P \tilde D \tilde Q =  \tilde P \begin{pmatrix} \mathrm{diag}(\lambda_1^{1/2},\ldots,\lambda_d^{1/2})\tilde Q\\ 0 \end{pmatrix},
\]
since $E^\top\mathbf{X}^\top\mathbf{X}E =\mathrm{diag}(\lambda_1,\ldots,\lambda_d)$ and
where $\tilde P,\tilde Q$ are orthogonal matrices of appropriate dimensions.
We therefore have
\begin{equation}\label{eq:tildex}
\tilde{\mathbf{Y}} =  \begin{pmatrix} E^\top \mathbf{X}^\top \\ (0\,\, I) \tilde P^\top\end{pmatrix}\mathbf{Y} =  \begin{pmatrix} \mathrm{diag}(\lambda_1,\ldots,\lambda_d) \\0\end{pmatrix}\beta_S + \tilde\epsilon,\quad \tilde\epsilon\sim\mathcal{N}\left(0,\sigma_S^2\begin{pmatrix} \mathrm{diag}(\lambda_1,\ldots,\lambda_d) & 0 \\ 0 & \tilde\Gamma\end{pmatrix}\right),
\end{equation}
where $\tilde\Gamma$ is some positive-definite matrix that does not concern us in the following calculations. By Le Cam's method~\citep[Chapter~2]{ref:tsybakov2008introduction}, we then obtain
\[
\inf_{\widehat{\beta}_i}\sup_{((\beta_S)_i-(\beta_T)_i)^2\leq \alpha_i^\star U^2}\mathbb{E}_{P_S,P_T}[(\widehat\beta_i-(\beta_T)_i)^2]\geq \frac{((\beta_T)_i^0 - (\beta_T)_i^1)^2}{16} \exp{\left(- \frac{((\beta_T)_i^0 - (\beta_T)_i^1)^2}{2\sigma_T^2} - \frac{\lambda_i((\beta_S)_i^0 - (\beta_S)_i^1)^2}{2\sigma_S^2}\right)},
\]
for any
$$|(\beta_T)_i^0-(\beta_S)_i^0|=|(\beta_T^0- \beta_{S}^0)_i|\leq \sqrt{\alpha_i^\star} U$$ and  $$|(\beta_T)_i^1-(\beta_S)_i^1|=|(\beta_T^1- \beta_{S}^1)_i|\leq \sqrt{\alpha_i^\star} U.$$
From Lemma~\ref{lemma:onedimconditions}, upon choosing $K=1$, we know there exists  $(\beta_T)_i^0,(\beta_T)_i^1,(\beta_S)_i^0,(\beta_S)_i^1$ such that 
\[
(\beta_T^0- \beta_{S}^0)_i^2\leq \alpha_i^\star U^2, (\beta_T^1- \beta_{S}^1)_i^2\leq \alpha_i^\star U^2, (\beta_T^0-\beta_T^1)_i^2 = \frac{1}{\frac{1}{\alpha_i^\star U^2 +\frac{\sigma^2_{S}}{\lambda_i}}+\frac{1}{\sigma_T^2}}
\]
and
\[
(\beta_{S}^0-\beta_{S}^1)_i^2\leq \frac{\frac{1}{\alpha_i^\star U^2 +\frac{\sigma_{S}^2}{\lambda_i}}\frac{\sigma_{S}^2}{\lambda_i}}{\frac{1}{\alpha_i^\star U^2 +\frac{\sigma_{S}^2}{\lambda_i}}+\frac{1}{\sigma_T^2}}.
\]
We therefore have
\begin{align*}
    \frac{((\beta_T)_i^0 - (\beta_T)_i^1)^2}{16} \exp{\left(- \frac{((\beta_T)_i^0 - (\beta_T)_i^1)^2}{2\sigma_T^2} - \frac{\lambda_i((\beta_S)_i^0 - (\beta_S)_i^1)^2}{2\sigma_S^2}\right)}\geq \frac{\exp\left(-\frac{1}{2}\right)}{16} \frac{1}{\frac{1}{\alpha_i^\star U^2 +\frac{\sigma^2_{S}}{\lambda_i}}+\frac{1}{\sigma_T^2}},
\end{align*}
and thus conclude the lower bound
\[
\inf_{\widehat{\beta}}\sup_{\tilde D(\beta_S,\beta_T)\leq U^2}\mathbb{E}_{P_S,P_T}[\tilde\ell(\widehat\beta,\beta_T)]\geq\frac{\exp\left(-\frac{1}{2}\right)}{16}\sum_{i=1}^d \frac{1}{\frac{1}{\alpha_i^\star U^2 +\frac{\sigma^2_{S}}{\lambda_i}}+\frac{1}{\sigma_T^2}}.
\]
\end{proof}

\subsection{Proof of Remark~\ref{rm:informtheory}}
\label{apdx:rm:informtheory}
\textbf{Remark~\ref{rm:informtheory}.}
\textit{
Using the channel capacity of a non-Gaussian additive noise channel~\citep{ref:IHARA1978capacity}, we can improve the uniform constant $\displaystyle \; \frac{\exp{\left(-\frac{1}{2}\right)}}{16} \;$ to
\[
 \max\left\{\frac{\exp{\left(-\frac{1}{2}\right)}}{16},\left(\frac{\frac{\sigma_S^2}{\lambda_i}}{\alpha_i^\star U^2+\frac{\sigma_S^2}{\lambda_i}}\right)^2\right\}
 \]
for the $i$-th summand in~\eqref{eq:lbformulalinearregression}. Note that the second term is $1$ if $\alpha_i^\star=0$, and it is arbitrarily close to $1$ if $U$ is sufficiently small.
}
%
\begin{proof}
It is well known that the minimax risk is lower bounded by the Bayesian risk
\[
\inf_{\widehat{\beta}}\sup_{\tilde D(\beta_S,\beta_T)\leq U^2}\mathbb{E}_{P_S,P_T}[\tilde\ell(\widehat\beta,\beta_T)]\geq \inf_{\widehat{\beta}}\mathbb{E}[\|\widehat\beta-\beta_T\|_2^2],
\]
where the expectation on the right-hand side refers to a fixed design model (i.e., the predictors in the source and target are given), there is a prior on both $\beta_S$ and $\beta_T$, and the responses, conditional on the prior, follow the $P_S$ and $P_T$ models for source and target environments, respectively. We assume independent priors $(\beta_T)_i\sim\mathcal{N}(0,\sigma^2)$, and further assume $(\beta_S)_i = (\beta_T)_i +\sqrt{\alpha_i^\star}\Delta_i$ where $\Delta_i$ assigns a probability of $0.5$ to $-U$ and a probability of $0.5$ to $U$.
By the maximum entropy of the Gaussian distribution and the data processing inequality, we have
\[
\inf_{\widehat{\beta}}\mathbb{E}[\|\widehat\beta-\beta_T\|_2^2]\geq\frac{1}{2\pi e}\sum_{i=1}^d e^{2h((\beta_T)_i)-2I(\mathbf{Y},\mathbf{V};(\beta_T)_i)}.
\]
Since mutual information is invariant under invertible transformations, we obtain
\[
I(\mathbf{Y},\mathbf{V};(\beta_T)_i) = I(\tilde{\mathbf{V}},\tilde{\mathbf{Y}};(\beta_T)_i) = I(\tilde{\mathbf{V}}_i,\tilde{\mathbf{Y}}_i;(\beta_T)_i),
\]
where $\tilde V$ and $\tilde Y$ are invertible transformations of $V$ and $Y$; refer to~\eqref{eq:tildev} and~\eqref{eq:tildex}.
We also know that
\[
\tilde{\mathbf{V}}_i = (\beta_T)_i + \tilde\eta_i
\]
and
\[
\frac{\tilde{\mathbf{Y}}_i}{\lambda_i} = (\beta_T)_i +\sqrt{\alpha_i^\star}\Delta_i + \frac{1}{\lambda_i}\tilde\epsilon_i.
\]
Noting the decomposition
\[
I(\tilde{\mathbf{V}}_i,\tilde{\mathbf{Y}}_i;(\beta_T)_i) = I(\tilde{\mathbf{V}}_i;(\beta_T)_i) + I (\tilde{\mathbf{Y}}_i;(\beta_T)_i| \tilde{\mathbf{V}}_i),
\]
then, as $\sigma^2\to\infty$, we have
\[
I(\tilde{\mathbf{V}}_i;(\beta_T)_i) \sim \frac{1}{2}(\log(\sigma^2)- \log(\sigma_T^2)).
\]
For the second term of this decomposition, we obtain
$$I (\tilde{\mathbf{Y}}_i;(\beta_T)_i| \tilde{\mathbf{V}}_i)  = \mathbb{E}_{\tilde{\mathbf{V}}_i}[I (\tilde{\mathbf{Y}}_i;(\beta_T)_i| \tilde{\mathbf{V}}_i =\tilde v_i)].$$
Due to conditional independence, we know that $\tilde{\mathbf{Y}}_i|((\beta_T)_i, \tilde{\mathbf{V}}_i =\tilde v_i)$ has the same distribution as $\tilde{\mathbf{Y}}_i|(\beta_T)_i$. Hence, we see that
\[
I (\tilde{\mathbf{Y}}_i;(\beta_T)_i| \tilde{\mathbf{V}}_i =\tilde v_i) = I(\tilde{\mathbf{Y}}_i;(\tilde{\beta}_T)_i),
\]
where
\[
(\tilde{\beta}_T)_i\sim \mathcal{N}((1+\frac{1}{\sigma^2})^{-1}v_{i},\sigma_T^2(1 +\frac{1}{\sigma^2})^{-1}).
\]
From the non-Gaussian additive noise channel capacity~\citep{ref:IHARA1978capacity}, we have
\[
I (\tilde{\mathbf{Y}}_i;(\tilde{\beta}_T)_i)\leq \frac{1}{2}\log(1+\frac{\sigma_T^2(1+\sigma^{-2})^{-1}}{\alpha_i^\star U^2 + \frac{\sigma_S^2}{\lambda_i}})+ \mathrm{KL}(P_{\sqrt{\alpha_i^\star}\Delta_i+\frac{1}{\lambda_i}\tilde\epsilon_i}\|\mathcal{N}(0, \alpha_i^\star U^2 + \frac{\sigma_S^2}{\lambda_i})),
\]
and by the convexity of KL divergence, we obtain
\begin{align*}
& \mathrm{KL}(P_{\sqrt{\alpha_i^\star}\Delta_i+\frac{1}{\lambda_i}\tilde\epsilon_i}\|\mathcal{N}(0, \alpha_i^\star U^2 + \frac{\sigma_S^2}{\lambda_i}))\\
&\qquad \leq \frac{1}{2}\mathrm{KL}(\mathcal{N}(\sqrt{\alpha_i^\star}U,\frac{\sigma_S^2}{\lambda_i})\|\mathcal{N}(0, \alpha_i^\star U^2 + \frac{\sigma_S^2}{\lambda_i})) +\frac{1}{2}\mathrm{KL}(\mathcal{N}(-\sqrt{\alpha_i^\star}U,\frac{\sigma_S^2}{\lambda_i})\|\mathcal{N}(0, \alpha_i^\star U^2 + \frac{\sigma_S^2}{\lambda_i}))\\
&\qquad \leq \log(\alpha_i^\star U^2+\frac{\sigma_S^2}{\lambda_i})-\log(\frac{\sigma_S^2}{\lambda_i}).
\end{align*}
We therefore have
\[
\lim_{\sigma^2\to\infty} h((\beta_T)_i) - I(\tilde{\mathbf{V}}_i,\tilde{\mathbf{Y}}_i;(\beta_T)_i) \geq \frac{1}{2}\left(\log(2\pi e) -\log(\frac{1}{\sigma_T^2} +\frac{1}{\alpha^\star_iU^2+\frac{\sigma_S^2}{\lambda_i}})\right) +\log(\frac{\sigma_S^2}{\lambda_i}) - \log(\alpha_i^\star U^2+\frac{\sigma_S^2}{\lambda_i}),
\]
and thus conclude the lower bound
\[
 \inf_{\widehat{\beta}}\mathbb{E}[\|\widehat\beta-\beta_T\|_2^2]\geq\sum_{i=1}^d \left(\frac{\frac{\sigma_S^2}{\lambda_i}}{\alpha_i^\star U^2+\frac{\sigma_S^2}{\lambda_i}}\right)^2\cdot \frac{1}{\frac{1}{\sigma_T^2} + \frac{1}{\alpha_i^\star U^2 + \frac{\sigma_S^2}{\lambda_i}}}.
\]
\end{proof}

\subsection{Proof of Proposition~\ref{prop:basicapproches}}
\label{apdx:prop:basicapproches}

\textbf{Proposition~\ref{prop:basicapproches}.}
\textit{
\begin{enumerate}
\item
For the
LRM
based solely on the source dataset, the estimator $\widehat{\theta}_S=E\widehat{\beta}_S$ satisfies
    \[
        \sup_{D(\theta_S,\theta_T)\leq U^2}\mathbb{E}_{P_S,P_T}[\ell(\widehat{\theta}_S,\theta_T)]  = U^2  + \sigma_S^2 \sum_{i=1}^d \lambda_i^{-1} .
    \]
\item
For the
LRM
based solely on the target dataset, the estimator $\widehat{\theta}_T= E\widehat{\beta}_T$ satisfies
    \[
    \sup_{D(\theta_S,\theta_T)\leq U^2}\mathbb{E}_{P_S,P_T}[\ell(\widehat{\theta}_T,\theta_T)]  =  d \sigma_T^2 .
    \]
\item
Finally, for the
LRM
based on pooling the source and target datasets, the estimator 
    \begin{equation}
    \widehat{\theta}_P = \left(\begin{pmatrix}
     \mathbf{X}^\top \;\; \mathbf{W}^\top\end{pmatrix}\begin{pmatrix}
    \mathbf{X}\\\mathbf{W}\end{pmatrix}\right)^{-1} \cdot \begin{pmatrix}
     \mathbf{X}^\top \;\; \mathbf{W}^\top\end{pmatrix}\begin{pmatrix} \mathbf{Y}\\\mathbf{V}\end{pmatrix}
     \tag{\ref{eq:pooling-estimator}}
    \end{equation}
    satisfies
    \begin{equation*}
    \sup_{D(\theta_S,\theta_T)\leq U^2}\mathbb{E}_{P_S,P_T}[\ell(\widehat{\theta}_P,\theta_T)]
    =   U^2\max_{1\leq i\leq d}\left\{\left(\frac{\lambda_i}{1+\lambda_i}\right)^2\right\}
    + \sigma_T^2\sum_{i=1}^d \left(\frac{1}{1+\lambda_i}\right)^2
    + \sigma_S^2 \sum_{i=1}^d \frac{\lambda_i}{(1+\lambda_i)^2} .
    \end{equation*}
\end{enumerate}
}
%
%
\begin{proof}
It is more convenient to work with the reparametrization~\eqref{eq:reparameterization}. Note that
\begin{align*}
\widehat{\beta}_S &= (E^\top\mathbf{X}^\top\mathbf{X} E)^{-1} E^\top\mathbf{X}^\top \mathbf{Y}\; \sim\; \mathcal{N}(\beta_S,\sigma_S^2 \mathrm{diag}(\lambda_1^{-1},\ldots,\lambda_d^{-1})),\\
\widehat{\beta}_T &= (E^\top\mathbf{W}^\top\mathbf{W} E)^{-1} E^\top\mathbf{W}^\top \mathbf{V} \; \sim\; \mathcal{N}(\beta_T,\sigma_T^2 I).
\end{align*}
We then have for the estimator $\widehat{\theta}_S$
\begin{align*}
 \sup_{\tilde D(\beta_S,\beta_T)\leq U^2}\mathbb{E}_{P_S,P_T}[\|\widehat{\beta}_S-\beta_T\|_2^2] & =\sup_{\|\beta_S-\beta_T\|_2^2\leq U^2} \mathrm{Tr}\left(\sigma_S^2\cdot \mathrm{diag}(\lambda_1^{-1},\ldots,\lambda_d^{-1})\right) + \sum_{i=1}^d  ((\beta_S)_i-(\beta_T)_i)^2\\
 & =   \sigma_S^2 \sum_{i=1}^d \lambda_i^{-1} +U^2,
 \end{align*}
 and similarly for the estimator $\widehat{\theta}_T$
 \begin{align*}
 \sup_{\tilde D(\beta_S,\beta_T)\leq U^2}\mathbb{E}_{P_S,P_T}[\|\widehat{\beta}_T-\beta_T\|_2^2] & =\sup_{\|\beta_S-\beta_T\|_2^2\leq U^2} \mathrm{Tr}\left(\sigma_T^2\cdot I\right) =   \sigma_T^2 d.
 \end{align*}
 For the pooling estimator~\eqref{eq:pooling-estimator}, consider its reparametrization
  \[
    \widehat{\beta}_P = \left(\begin{pmatrix}
    E^\top \mathbf{X}^\top & E^\top \mathbf{W}^\top\end{pmatrix}\begin{pmatrix}
    \mathbf{X}E\\\mathbf{W}E\end{pmatrix}\right)^{-1}\begin{pmatrix}
    E^\top \mathbf{X}^\top & E^\top \mathbf{W}^\top\end{pmatrix}\begin{pmatrix} \mathbf{Y}\\\mathbf{V}\end{pmatrix} ,
    \]
 whose bias we can compute as
 \begin{align*}
 \mathbb{E}_{P_S,P_T}[\widehat{\beta}_P] -\beta_T & = \left(E^\top\bX^\top\bX E + E^\top\bW^\top\bW E\right)^{-1}\left(E^\top\bX^\top\bX E\beta_S + E^\top\bW^\top\bW E\beta_T\right) -\beta_T\\
 & = \mathrm{diag}\left(1+\lambda_1,\ldots,1+\lambda_d\right)^{-1} \left(\mathrm{diag}\left(\lambda_1,\ldots,\lambda_d\right)\beta_S + \beta_T\right) - \beta_T\\
 & = \mathrm{diag}\left(\lambda_1/(1+\lambda_1),\ldots,\lambda_d/(1+\lambda_d)\right)(\beta_S-\beta_T),
 \end{align*}
 and whose variance we can compute as
 \begin{align*}
   \mathbb{E}_{P_S,P_T}[\|\widehat{\beta}_P\|_2^2] & =  \mathrm{Tr}\left(  \mathrm{diag}\left(1+\lambda_1,\ldots,1+\lambda_d\right)^{-1}\left(\mathrm{diag}\left(\lambda_1,\ldots,\lambda_d\right)\sigma_S^2 +\sigma_T^2 I\right)\mathrm{diag}\left(1+\lambda_1,\ldots,1+\lambda_d\right)^{-1}\right)\\
   & = \sigma_T^2\sum_{i=1}^d \left(\frac{1}{1+\lambda_i}\right)^2
    + \sigma_S^2 \sum_{i=1}^d \frac{\lambda_i}{(1+\lambda_i)^2}.
 \end{align*}
 We therefore obtain
\begin{align*}
 \sup_{\tilde D(\beta_S,\beta_T)\leq U^2}\mathbb{E}_{P_S,P_T}[\|\widehat{\beta}_P-\beta_T\|_2^2] & =\sup_{\|\beta_S-\beta_T\|_2^2\leq U^2}\sigma_T^2\sum_{i=1}^d \left(\frac{1}{1+\lambda_i}\right)^2
    + \sigma_S^2 \sum_{i=1}^d \frac{\lambda_i}{(1+\lambda_i)^2} + \sum_{i=1}^d \left(\frac{\lambda_i}{1+\lambda_i}\right)^2(\beta_S-\beta_T)_i^2\\
 & =   U^2\max_{1\leq i\leq d}\left\{\left(\frac{\lambda_i}{1+\lambda_i}\right)^2\right\}
    + \sigma_T^2\sum_{i=1}^d \left(\frac{1}{1+\lambda_i}\right)^2
    + \sigma_S^2 \sum_{i=1}^d \frac{\lambda_i}{(1+\lambda_i)^2},
 \end{align*}
 thus completing the proof.
\end{proof}

To see that the worst-case risk~\eqref{eq:ubfdformula} is also smaller than that of the pooling method, we can compute
\begin{align*}
     \frac{1}{\frac{1}{\sigma_T^2} + \frac{1}{\alpha_i^\star U^2+  \frac{\sigma_S^2}{\lambda_i}}} - & \sigma_T^2\left(\frac{1}{1+\lambda_i}\right)^2 - \sigma_S^2 \frac{\lambda_i}{(1+\lambda_i)^2}\\
     & \qquad = \left(\frac{\lambda_i}{1+\lambda_i}\right)^2 \frac{\lambda_i\sigma_T^2 +(2\sigma_T^2-\sigma_S^2)}{\lambda_i\sigma_T^2+\lambda_i\alpha_i^\star U^2+\sigma_S^2}\alpha_i^\star U^2 + \frac{\lambda_i}{(1+\lambda_i)^2}\frac{2\sigma_S^2\sigma_T^2 -\sigma_S^4-\sigma_T^4}{\lambda_i\sigma_T^2+\lambda_i\alpha_i^\star U^2+\sigma_S^2}\\
     & \qquad = \left(\frac{\lambda_i}{1+\lambda_i}\right)^2 \frac{\lambda_i\sigma_T^2 +(2\sigma_T^2-\sigma_S^2)+ \frac{2\sigma_S^2\sigma_T^2-\sigma_S^4-\sigma_T^4}{\lambda_i\alpha_i^\star U^2}}{\lambda_i\sigma_T^2+\lambda_i\alpha_i^\star U^2+\sigma_S^2}\alpha_i^\star U^2.
\end{align*}
It is then readily verified that
\[
(2\sigma_T^2-\sigma_S^2)+ \frac{2\sigma_S^2\sigma_T^2-\sigma_S^4-\sigma_T^4}{\lambda_i\alpha_i^\star U^2}\leq \lambda_i\alpha_i^\star U^2 +\sigma_S^2,
\]
as this is equivalent to
\[
2(\sigma_T^2-\sigma_S^2)\lambda_i\alpha_i^\star U^2\leq \left(\lambda_i\alpha_i^\star U^2\right)^2 + \left(\sigma_S^2-\sigma_T^2\right)^2.
\]
Hence, we have the desired inequality
\[
\sum_{i=1}^d \frac{1}{\frac{1}{\sigma_T^2} + \frac{1}{\alpha_i^\star U^2+  \frac{\sigma_S^2}{\lambda_i}}}\leq  U^2\max_{1\leq i\leq d}\left\{\left(\frac{\lambda_i}{1+\lambda_i}\right)^2\right\}+ \sigma_T^2\sum_{i=1}^d \left(\frac{1}{1+\lambda_i}\right)^2 + \sigma_S^2 \sum_{i=1}^d \frac{\lambda_i}{(1+\lambda_i)^2}.
\]

\subsection{Proof of Theorem~\ref{thm:glmlb}}
\label{apdx:thm:glmlb}
\textbf{Theorem~\ref{thm:glmlb}.}
\textit{
A lower bound of the minimax risk corresponding to the GLMs is given by
\begin{equation*}
\inf_{\widehat{\theta}}\sup_{D(\theta_{S_m},\theta_T)\leq U_m^2, \forall m} \mathbb{E}[\ell(\widehat{\theta},\theta_T)] \geq\frac{e^{-1}}{800} \frac{d}{\sum_{m=1}^M\frac{1}{\frac{U_m^2}{d} +\frac{a^{(m)}(\sigma_{S_m})}{C_{S_m}\lambda_1^{(m)}}}+\frac{C_T}{b(\sigma_T)}} .
\end{equation*}
}
%
\begin{proof}
First consider the case where $d\leq 100$, for which we use Le Cam's method~\citep[Chapter~2]{ref:tsybakov2008introduction}. For two pairs of parameters $(\theta_{S_1}^0,\ldots,\theta_{S_M}^0,\theta_T^0)$ and $(\theta_{S_1}^1,\ldots,\theta_{S_M}^1,\theta_T^1)$,  we have
\[
\inf_{\widehat{\theta}}\sup_{D(\theta_{S_m},\theta_T)\leq U_m^2\forall m} \mathbb{E}[\ell(\widehat{\theta},\theta_T)]\geq \frac{\ell(\theta_T^0,\theta_T^1)}{16}\exp\left\{-\mathrm{KL}((\theta_{S_1}^0,\ldots,\theta_{S_M}^0,\theta_T^0) ; (\theta_{S_1}^1,\ldots,\theta_{S_M}^1,\theta_T^1))\right\}.
\]
By independence, we note that
\begin{align*}
    \mathrm{KL}((\theta_{S_1}^0,\ldots,\theta_{S_M}^0,\theta_T^0) ; (\theta_{S_1}^1,\ldots,\theta_{S_M}^1,\theta_T^1)) & = \sum_{m=1}^M \mathrm{KL}(\theta_{S_m}^0;\theta_{S_m}^1) +\mathrm{KL}(\theta_T^0;\theta_T^1)
\end{align*}
and
\begin{align*}
    \mathrm{KL}(\theta_{S_m}^0;\theta_{S_m}^1) & = \frac{1}{a^{(m)}(\sigma_{S_m})}\sum_{i=1}^{n_{S_m}}\big(\Psi^{(m)}(\langle x_i^{(m)},\theta_{S_m}^1\rangle ) - \Psi^{(m)}(\langle x_i^{(m)},\theta_{S_m}^0\rangle) \\
    &\qquad\qquad\qquad\qquad - \langle (\Psi^{(m)})^{'}(\langle x_i^{(m)},\theta_{S_m}^0\rangle)x^{(m)}_i,\theta_{S_m}^1-\theta_{S_m}^0\rangle\big)\\
    &\leq \frac{1}{a^{(m)}(\sigma_{S_m})}\sum_{i=1}^{n_{S_m}} \frac{1}{2}C_{S_m}\sum_{j,k} x_{ij}^{(m)}x_{ik}^{(m)}(\theta_{S_m}^1-\theta_{S_m}^0)_{j}(\theta_{S_m}^1-\theta_{S_m}^0)_{k}\\
    &\leq \frac{C_{S_m}}{2a^{(m)}(\sigma_{S_m})}(\theta_{S_m}^1-\theta_{S_m}^0)^\top (\mathbf{X}^{(m)})^\top\mathbf{X}^{(m)}(\theta_{S_m}^1-\theta_{S_m}^0)\\
    &\leq \frac{\lambda_1^{(m)}C_{S_m}}{2a^{(m)}(\sigma_{S_m})}(\theta_{S_m}^1-\theta_{S_m}^0)^\top \mathbf{W}^\top\mathbf{W}(\theta_{S_m}^1-\theta_{S_m}^0).
\end{align*}
Similarly, we obtain
\[
\mathrm{KL}(\theta_T^0;\theta_T^1) \leq \frac{C_T}{2b(\sigma_T)}(\theta_T^1-\theta_T^0)^\top \mathbf{W}^\top\mathbf{W}(\theta_T^1-\theta_T^0) .
\]
Then, by Le Cam's bound~\citep[Chapter~2]{ref:tsybakov2008introduction}, we have
\begin{align*}
    \inf_{\widehat{\theta}}\sup_{D(\theta_{S_m},\theta_T)\leq U_m^2\forall m}\mathbb{E}[\ell(\widehat{\theta},\theta_T)]
    & \geq \frac{\ell(\theta_T^0,\theta_T^1)}{16}\exp\left\{-\mathrm{KL}((\theta_{S_1}^0,\ldots,\theta_{S_M}^0,\theta_T^0) ; (\theta_{S_1}^1,\ldots,\theta_{S_M}^1,\theta_T^1))\right\}\\
    & \geq \frac{(\theta_T^1-\theta_T^0)^\top \mathbf{W}^\top\mathbf{W}(\theta_T^1-\theta_T^0)}{16} \exp\left\{-\frac{C_T}{2b(\sigma_T)}(\theta_T^1-\theta_T^0)^\top \mathbf{W}^\top\mathbf{W}(\theta_T^1-\theta_T^0)\right\} \; \cdot\\
    &\qquad\qquad\qquad\qquad\qquad \exp\left\{-\sum_{m=1}^M\frac{\lambda_1^{(m)}C_{S_m}}{2a^{(m)}(\sigma_{S_m})}(\theta_{S_m}^1-\theta_{S_m}^0)^\top \mathbf{W}^\top\mathbf{W}(\theta_{S_m}^1-\theta_{S_m}^0)\right\}\\
    & = \frac{\|\beta_T^1-\beta_T^0\|_2^2}{16}\exp\left\{-\sum_{i=1}^d\left(\frac{C_T}{2b(\sigma_T)}(\beta_T^1-\beta_T^0)_i^2 + \sum_{m=1}^M \frac{\lambda_1^{(m)}C_{S_m}}{2a^{(m)}(\sigma_{S_m})}(\beta_{S_m}^1-\beta_{S_m}^0)_i^2  \right)\right\},
\end{align*}
where 
\[
\theta_T^j = E\beta_T^j, \theta_{S_m}^j = E\beta_{S_m}^j
\]
and $E\in\mathbb{R}^{d\times d}$ is any matrix that satisfies 
\[
E^\top \mathbf{W}^\top\mathbf{W}E = I .
\]
By Lemma~\ref{lemma:onedimconditions}, for any $K\geq\frac{1}{4}$, we can choose $\beta_T^0,\beta_T^1,\beta_{S_m}^0,\beta_{S_m}^1,m\in [M],$ such that 
\[
(\beta_T^0- \beta_{S_m}^0)_i^2\leq \frac{U_m^2}{d}, (\beta_T^1- \beta_{S_m}^1)_i^2\leq \frac{U_m^2}{d}, (\beta_T^0-\beta_T^1)_i^2 =\frac{1}{K} \frac{1}{\sum_{k=1}^M\frac{1}{\frac{U_k^2}{d} +\frac{a^{(k)}(\sigma_{S_k})}{C_{S_k}\lambda^{(k)}_1}}+\frac{C_T}{b(\sigma_T)}}
\]
and
\[
(\beta_{S_m}^0-\beta_{S_m}^1)_i^2\leq\frac{1}{K} \frac{\frac{1}{\frac{U_m^2}{d} +\frac{a^{(m)}(\sigma_{S_m})}{C_{S_m}\lambda^{(m)}_1}}\frac{a^{(m)}(\sigma_{S_m})}{C_{S_m}\lambda^{(m)}_1}}{\sum_{k=1}^M\frac{1}{\frac{U_k^2}{d} +\frac{a^{(k)}(\sigma_{S_k})}{C_{S_k}\lambda^{(k)}_1}}+\frac{C_T}{b(\sigma_T)}}.
\]
We therefore conclude
\begin{align*}
    \frac{\|\beta_T^1-\beta_T^0\|_2^2}{16}  & \exp\left\{-\sum_{i=1}^d \left(\frac{C_T}{2b(\sigma_T)}(\beta_T^1-\beta_T^0)_i^2 + \sum_{m=1}^M \frac{\lambda_1^{(m)}C_{S_m}}{2a^{(m)}(\sigma_{S_m})}(\beta_{S_m}^1-\beta_{S_m}^0)_i^2  \right)\right\}\\
    &\qquad\qquad\qquad\qquad \geq \frac{\exp\left(-\frac{d}{2K}\right)}{16K} \frac{d}{\sum_{k=1}^M\frac{1}{\frac{U_k^2}{d} +\frac{a^{(k)}(\sigma_{S_k})}{C_{S_k}\lambda^{(k)}_1}}+\frac{C_T}{b(\sigma_T)}}\\
    &\qquad\qquad\qquad\qquad \geq  \frac{\exp\left(-\frac{50}{K}\right)}{16K} \frac{d}{\sum_{k=1}^M\frac{1}{\frac{U_k^2}{d} +\frac{a^{(k)}(\sigma_{S_k})}{C_{S_k}\lambda^{(k)}_1}}+\frac{C_T}{b(\sigma_T)}}, \qquad\textrm{ for  } d\leq 100, \\
     &\qquad\qquad\qquad\qquad \geq  \frac{e^{-1}}{800} \frac{d}{\sum_{k=1}^M\frac{1}{\frac{U_k^2}{d} +\frac{a^{(k)}(\sigma_{S_k})}{C_{S_k}\lambda^{(k)}_1}}+\frac{C_T}{b(\sigma_T)}},
\end{align*}
where $K=50$ is chosen.

Now, for $d\geq 100$, we use Fano's method~\citep[Chapter~2]{ref:tsybakov2008introduction}. Let
\[
h  = \sqrt{\frac{1}{4K} \frac{1}{\sum_{k=1}^M\frac{1}{\frac{U_k^2}{d} +\frac{a^{(k)}(\sigma_{S_k})}{C_{S_k}\lambda^{(k)}_1}}+\frac{C_T}{b(\sigma_T)}}},
\]
and consider the hypercube
\[
\mathcal{C} = \{\beta\in\mathbb{R}^d: \beta_i \in \{-h,h\},i=1,\ldots,d\} .
\]
Then, by the Varshamov-Gilbert Lemma, since $d\geq8$, there exists a pruned hypercube $\beta_T^0,\ldots,\beta_T^J\in\mathcal{C}$ such that $J\geq 2^{d/8}$ and $H(\beta_T^j,\beta_T^k)\geq \frac{d}{8}$ for $0\leq j<k\leq J$, where $H$ denotes the Hamming distance, namely
\[
H(\beta_T^j,\beta_T^k) = \sum_{i=1}^d Id_{\{(\beta_T^j)_i\neq (\beta_T^k)_i\}} .
\]
We therefore have
\[
\min_{j\neq k} \|\beta_T^j-\beta_T^k\|_2^2\geq \frac{d}{8K} \frac{1}{\sum_{k=1}^M\frac{1}{\frac{U_k^2}{d} +\frac{a^{(k)}(\sigma_{S_k})}{C_{S_k}\lambda^{(k)}_1}}+\frac{C_T}{b(\sigma_T)}}.
\]
By Lemma~\ref{lemma:onedimconditions}, for any $m\in [M]$, there exists $k_m$ such that $0\leq k_m\leq h$ and
\[
(h-k_m)^2\leq \frac{U_m^2}{d}, (2k_m)^2\leq\frac{1}{K} \frac{\frac{1}{\frac{U_m^2}{d} +\frac{a^{(m)}(\sigma_{S_m})}{C_{S_m}\lambda^{(m)}_1}}\frac{a^{(m)}(\sigma_{S_m})}{C_{S_m}\lambda^{(m)}_1}}{\sum_{k=1}^M\frac{1}{\frac{U_k^2}{d} +\frac{a^{(k)}(\sigma_{S_k})}{C_{S_k}\lambda^{(k)}_1}}+\frac{C_T}{b(\sigma_T)}}.
\]
Hence, choosing $\beta_{S_m}^j,j = 0,\ldots, J,$ such that
\[
\forall 1\leq i \leq d, (\beta_{S_m}^j)_i =\begin{cases} k_m&\textrm{ if }(\beta_T^j)_i = h\\
-k_m&\textrm{ if }(\beta_T^j)_i = - h\end{cases},
\]
we obtain
\[
\|\beta_T^j - \beta_{S_m}^j \|_2^2 \leq U_m^2, \qquad j =0,\ldots,J, \qquad m\in [M] ,
\]
and
\[
 \mathrm{KL}((\theta_{S_1}^j,\ldots,\theta_{S_M}^j,\theta_T^j) ; (\theta_{S_1}^k,\ldots,\theta_{S_M}^k,\theta_T^k)) \leq   \frac{d}{2K}, \qquad\forall 0\leq j<k\leq M.
\]
We therefore have, by Fano's bound~\citep[Chapter~2]{ref:tsybakov2008introduction},
\begin{align*}
    \inf_{\widehat{\theta}}\sup_{D(\theta_{S_m},\theta_T)\leq U_m^2\forall m}\mathbb{E}[\ell(\widehat{\theta},\theta_T)]&\geq \frac{\min_{j\neq k} \|\beta_T^j-\beta_T^k\|_2^2}{4}\left(1-\frac{\frac{d}{2K}+\log 2}{\log J}\right)\\
    &\geq \frac{1}{32K}\left(1-\frac{4}{K\log2}-\frac{8}{d}\right)\frac{d}{\sum_{k=1}^M\frac{1}{\frac{U_k^2}{d} +\frac{a^{(k)}(\sigma_{S_k})}{C_{S_k}\lambda^{(k)}_1}}+\frac{C_T}{b(\sigma_T)}}\\
    &\geq \frac{3}{3200} \frac{d}{\sum_{k=1}^M\frac{1}{\frac{U_k^2}{d} +\frac{a^{(k)}(\sigma_{S_k})}{C_{S_k}\lambda^{(k)}_1}}+\frac{C_T}{b(\sigma_T)}},
\end{align*}
where $K=\frac{50}{3}$ is chosen and recalling that $d\geq100$. 
\end{proof}

\subsection{Proof of Remark~\ref{rm:leeandcourtadelb}}
\label{apdx:rm:leeandcourtadelb}
\textbf{Remark~\ref{rm:leeandcourtadelb}.}
\textit{
For the non-transfer learning setting considered in~\cite{ref:lee2020minimax}, our proof method gives rise to a lower bound of $$d\cdot b(\sigma_T)/C_T$$ which is sharper than their lower bound of
\[
\max\left\{\frac{\|\Lambda_\bW\|_1^2}{\|\Lambda_\bW\|_2^2},\lambda_{\textrm{min}}(\bW^\top\bW)\|\Lambda^{-1}_\bW\|_1\right\}\cdot b(\sigma_T)/C_T,
\]
where $\Lambda_\bW$ is the vector of eigenvalues of the positive-definite matrix $\bW^\top\bW$, and $\Lambda_\bW^{-1}$ denotes its coordinate-wise inverse.
}
\begin{proof}
By Holder's inequality, we have
\[
\|\Lambda_\bW\|_1^2\leq\|\Lambda_\bW\|_2^2 d.
\]
It is also readily verified that
\[
\lambda_{\textrm{min}}(\bW^\top\bW)\|\Lambda^{-1}_\bW\|_1\leq d,
\]
and the desired result follows.
\end{proof}

\subsection{Proof of Corollary~\ref{cor:multiplelr}}
\label{apdx:cor:multiplelr}
\textbf{Corollary~\ref{cor:multiplelr}.}
\textit{
When the GLMs considered are Gaussian 
LRMs,
then
a lower bound of the minimax risk is
\begin{equation*}
\inf_{\widehat{\theta}}\sup_{D(\theta_{S_m},\theta_T)\leq U_m^2, \forall m} \mathbb{E}[\ell(\widehat{\theta},\theta_T)]
\geq\frac{e^{-1}}{800} \frac{d}{\sum_{m=1}^M\frac{1}{\frac{U_m^2}{d} +\frac{\sigma_{S_m}^2}{\lambda_1^{(m)}}}+\frac{1}{\sigma_T^2}} .
\end{equation*}
}
\begin{proof}
Upon simply noting that, for Gaussian LRMs, we have $a^{(m)}(\sigma_{S_m}) = \sigma_{S_m}^2$, $b(\sigma_T)=\sigma_T^2$ and $C_{S_m}=C_T=1$,
the desired result then follows.
\end{proof}

\subsection{Auxiliary Result on the GLM Upper Bound}
\label{apx:glmupperboundproof}
We now provide details on the GLM upper bound in equation~\eqref{glmub}, where we additionally assume that
\[
\inf_z(\Psi^{(m)})^{''}(z)\geq L_{S_m}~\forall m, \qquad \inf_z\Gamma^{''}(z)\geq L_T .
\]
Consider the usual MLE estimator $\widehat\theta_{S_m}$ and $\widehat\theta_T$ for the source and target domains, and further consider a simplified interpolator
\[
\widehat\theta_{t} = \sum_{m=1}^M t_m \widehat\theta_{S_m} + t_{M+1} \widehat\theta_T,\qquad \sum_{m=1}^{M+1}t_m=1, \quad t_m\geq0.
\]
For any fixed admissible parameters satisfying $D(\theta_{S_m},\theta_T)\leq U_m^2, \forall m$, we claim that with probability at least $1-e^{-c}$, it holds that
\[
\ell(\widehat\theta_{t^\star},\theta_T)= (\widehat\theta_{t^\star} -\theta_{T})\mathbf{W}^\top\mathbf{W}(\widehat\theta_{t^\star} -\theta_{T})\leq \frac{d}{\sum_{m=1}^M \frac{1}{\frac{U_m^2}{d}+\frac{2C_{S_m}a^{(m)}(\sigma_{S_m})}{L_{S_m}^2\lambda_d^{(m)}}(c+\log(2dm))}+\frac{1}{\frac{2C_{T}b(\sigma_{T})}{L_{T}^2}(c+\log(2dm))}},
\]
where $t^\star$ solves
\[
\inf_{t = (t_1,\ldots,t_{M+1})\geq0,\sum_{m=1}^{M+1}t_m=1} \sum_{m=1}^M t_m^2\left(\frac{2dC_{S_m}a^{(m)}(\sigma_{S_m})}{L_{S_m}^2\lambda_d^{(m)}}(c+\log(2dm)) + U_m^2\right) + t_{M+1}^2 \frac{2dC_{T}b(\sigma_{T})}{L_{T}^2}(c+\log(2dm)).
\]
Given that the dimension and number of sources are fixed, we consider $c\gg\log(2dm)$ and compare against our lower bound in Theorem 3.1, namely
\[
\frac{e^{-1}}{800} \frac{d}{\sum_{m=1}^M\frac{1}{\frac{U_m^2}{d} +\frac{a^{(m)}(\sigma_{S_m})}{C_{S_m}\lambda_1^{(m)}}}+\frac{C_T}{b(\sigma_T)}},
\]
from which we find that there are gaps due to the ratios $\frac{C_{S_m}}{L_{S_m}}, \frac{C_{T}}{L_T}$ and the eigen gap $\frac{\lambda_1^{(m)}}{\lambda_d^{(m)}}$. 


\begin{proof}[Proof of Upper Bound] Using the sub-Gaussian concentration bound for GLM noise and the trick in Lemma 8 of Bastani (2021), we have the concentration bounds
\[
P\left((\widehat\theta_{S_m} -\theta_{S_m})\mathbf{W}^\top\mathbf{W}(\widehat\theta_{S_m} -\theta_{S_m})\leq \frac{2dC_{S_m}a^{(m)}(\sigma_{S_m})}{L_{S_m}^2\lambda_d^{(m)}}(c_1+\log(2d))\right) > 1 - e^{-c_1} , \qquad \forall m\in[M],
\]
and
\[
P\left((\widehat\theta_{T} -\theta_{T})\mathbf{W}^\top\mathbf{W}(\widehat\theta_{T} -\theta_{T})\leq \frac{2dC_{T}b(\sigma_{T})}{L_{T}^2}(c_1+\log(2d))\right) > 1 - e^{-c_1}.
\]
Due to independence, the probability of the intersection of the events happening is greater than $(1-e^{-c_1})^m\geq 1-2me^{-c_1}$ for large $c_1$. On the intersection of the events, we solve the problem
\[
\inf_{t = (t_1,\ldots,t_{M+1})\geq0,\sum_{m=1}^{M+1}t_m=1} \sum_{m=1}^M t_m^2\left(\frac{2dC_{S_m}a^{(m)}(\sigma_{S_m})}{L_{S_m}^2\lambda_d^{(m)}}(c_1+\log(2d)) + U_m^2\right) + t_{M+1}^2 \frac{2dC_{T}b(\sigma_{T})}{L_{T}^2}(c_1+\log(2d)),
\]
with optimal solution $t^\star$. We have that the optimal interpolator defined by
\[
\widehat\theta_{t^\star} = \sum_{m=1}^M t_m^\star \widehat\theta_{S_m} + t_{M+1}^\star \widehat\theta_T,
\]
satisfies
\[
(\widehat\theta_{t^\star} -\theta_{T})\mathbf{W}^\top\mathbf{W}(\widehat\theta_{t^\star} -\theta_{T})\leq \frac{d}{\sum_{m=1}^M \frac{1}{\frac{U_m^2}{d}+\frac{2C_{S_m}a^{(m)}(\sigma_{S_m})}{L_{S_m}^2\lambda_d^{(m)}}(c_1+\log(2d))}+\frac{1}{\frac{2C_{T}b(\sigma_{T})}{L_{T}^2}(c_1+\log(2d))}}.
\]
We choose $c_1 = c_1^{'} + \log(2m)$ so that this event happens with probability at least $1-e^{-c_1^{'}}$.
\end{proof}

\subsection{Auxiliary Result on the Comparison with~\cite{ref:Kalan2020}}
\label{apx:kalancomparison}
\cite{ref:Kalan2020} study a minimax lower bound for the linear regression setting (albeit under random design) and involve the spectral gap of the generalized eigenvalue problem we consider, with analogous definitions for the population distribution of their random design setting. The significance of our geometric perspective is best illustrated in comparison with their results where, in strong contrast, our analysis (in fixed design) takes care of the entire spectrum of the generalized eigenvalues. More precisely, our lower bound scaled by $1/n_T$ is lower bounded by (ignoring the constant $\exp{(-1/2)}/16$):
\[
\frac{1}{n_T}\sum_{i=1}^d \frac{1}{\frac{1}{\sigma^2} + \frac{1}{\alpha_i^\star U^2+  \frac{\sigma^2}{\lambda_i}}}\stackrel{\textrm{``spectral gap''}}{\geq}\frac{1}{n_T} \frac{d\sigma^2}{1 + \frac{1}{\frac{U^2}{d\sigma^2}+  \frac{1}{\lambda_1}}} \geq \frac{1}{n_T}\begin{cases}
c_1 \sigma^2 d & \textrm{ if } U^2\geq \tilde c_1\sigma^2d,\\
c_2 U^2 & \textrm{ if } \tilde c_2\frac{\sigma^2d}{1+\lambda_1}\leq U^2\leq \tilde c_1 \sigma^2d,\\
 c_3\frac{\sigma^2d}{1+\lambda_1} & \textrm{ if } U^2\leq \tilde c_2 \frac{\sigma^2d}{1+\lambda_1},
\end{cases}
\]
where $c_i,\tilde c_i$ are universal constants. The last expression essentially amounts to the lower bound in equation (3.1) in~\cite{ref:Kalan2020}, adjusting for the random designs and the scaling of $U$. We emphasize that the spectral gap may cause the last expression to be arbitrarily suboptimal, e.g., when $U=o(1)$, $\lambda_1\to\infty$ and $\lambda_d=O(1)$, while our analysis is sharp (i.e., the upper bound and lower bound match).

\section{Simulation Results}\label{apdx:results}
We first describe the choice of the algorithm from~\cite{ref:LiCaiLi2020} used for comparison against our proposed method. Since the setting is a single source domain from which learning is transferred to a target domain, we only consider Algorithm 1 from~\cite{ref:LiCaiLi2020}, i.e., the (original) Oracle Trans-Lasso algorithm.
Algorithm 4 from~\cite{ref:LiCaiLi2020} uses the $l_0$-norm to quantify the difference between the source and target parameters. However, for this algorithm, they require that the $l_0$-difference (denoted by $h_0$) is much smaller than the $l_0$ sparsity of the target parameter (denoted by $s$) for the learning to be effective, which does not hold in our simulation settings and in the real-world dataset (where $h_0=8$ and $s\approx8$ from Table~\ref{tab:comparisonrealdatadifferentnorms} and Figure~\ref{fig:targetregestimation} in Section~\ref{apdx:sim_realdata}). The Oracle Trans-Lasso algorithm from Appendix C.2 in~\cite{ref:LiCaiLi2020} uses the $l_q$-norm, $q\in(0,1)$, to quantify the difference between the source and target parameters. However, for this algorithm, they require that the $l_q$-difference (denoted by $h_q$) is much smaller than $\sqrt{s\log d/n_T}$ for the learning to be effective, which also does not hold in our simulation settings and in the real-world dataset (refer to Table~\ref{tab:comparisonrealdatadifferentnorms} in Section~\ref{apdx:sim_realdata} and note that $\sqrt{\log d/n_T}$ is between $0.018$ and $0.2$). We therefore choose to compare our proposed method with Algorithm 1 from~\cite{ref:LiCaiLi2020}, which uses the $l_1$-norm to quantify the difference.

\subsection{Additional Simulation Comparisons}
\label{apdx:sim}
As part of our additional simulation results related to the misspecification of $U$, we vary the parameter values from the baseline in Section~\ref{ssec:simmisspec} such that the rows of $\mathbf{X}$ are generated independently by a zero-mean Gaussian with a Toeplitz covariance matrix~\citep[Section~5.2]{ref:LiCaiLi2020},
or that both $\mathbf{X}$ and $\mathbf{W}$ are generated in this way.
The corresponding results are provided in the left plot and right plot of Figure~\ref{fig:mistoeplitz}, respectively.
We observe from these results that the introduction of correlation, either in only $\mathbf{X}$ or in both $\mathbf{X}$ and $\mathbf{W}$, does not impact the performance of either method when compared to the uncorrelated case in the left plot of Figure~\ref{fig:misbaseandlargeu}.
\begin{figure}[htb]
    \centering
    \includegraphics[width=0.7\columnwidth]{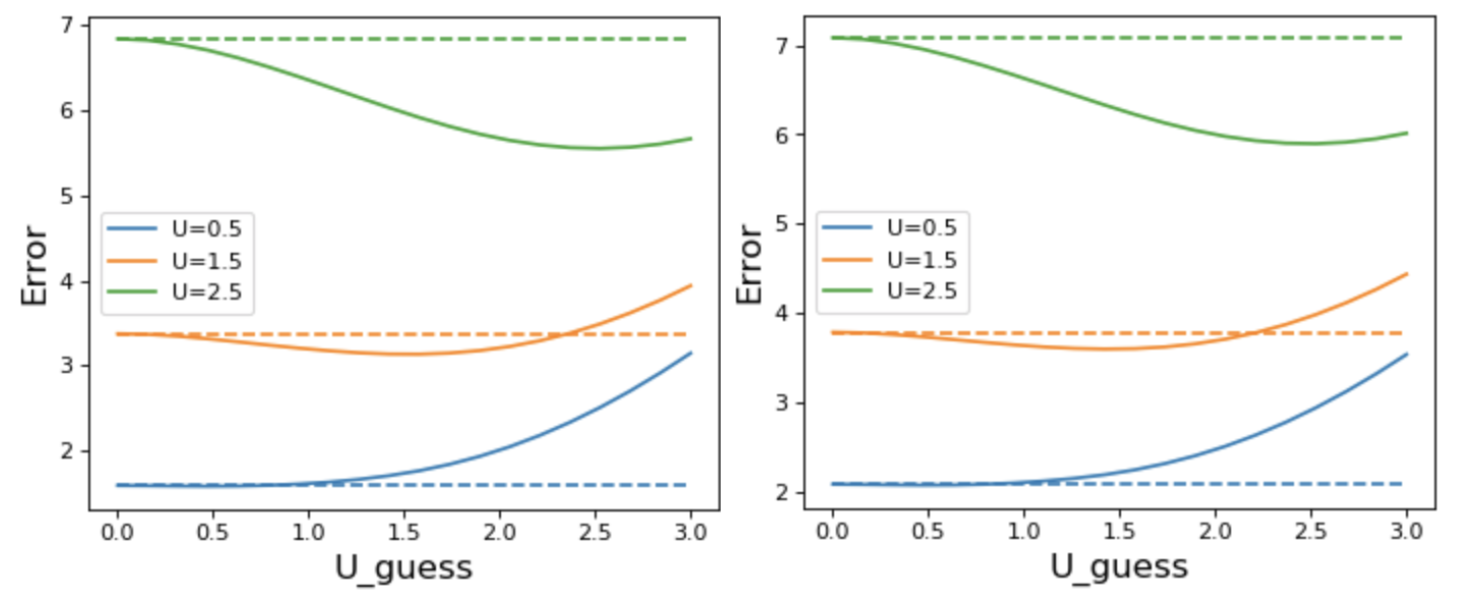}
    \caption{Simulation Results with Toeplitz Covariance Matrix for Designs.
    Solid Lines (Dashed Lines) Represent the Proposed Method (Basic Pooling Method).
    }
    \label{fig:mistoeplitz}
\end{figure}

We also consider varying the noise variances.  In particular, the noise variances are changed to $\sigma_S^2=1,\sigma_T^2=5$ or $\sigma_S^2=5,\sigma_T^2 =1$, the results of which are provided in the 
left plot and right plot of Figure~\ref{fig:misunequalvar}, respectively. The proposed method handles large variances in the source data much better than the basic pooling method, while high variance in (smaller sized) target data has no significant impact on either method.
\begin{figure}[h]
    \centering
    \includegraphics[width=0.7\columnwidth]{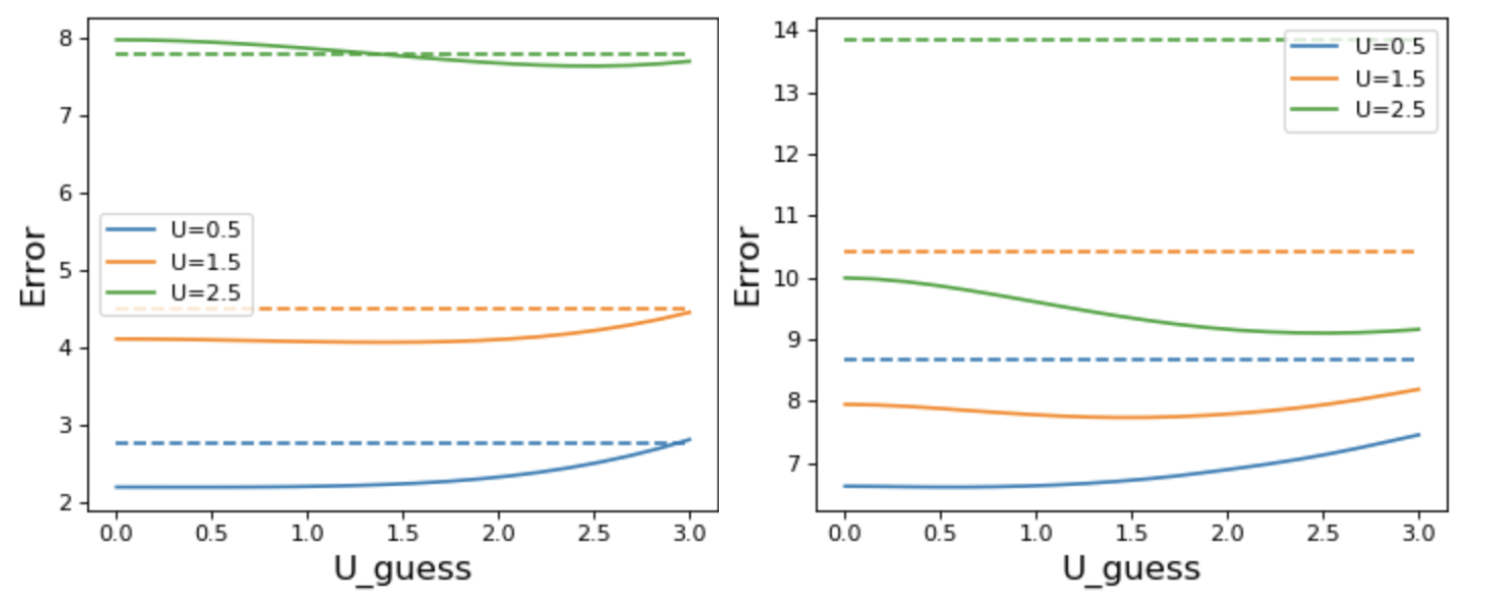}
    \caption{Simulation Results with Unequal Noise Variances. Solid Lines Represent the Proposed Method. Dashed Lines Represent the Basic Pooling Method.}
    \label{fig:misunequalvar}
\end{figure}

We further consider varying the different dimensions.
In particular, the dimension $d$ is changed to $5$ or $d=100$ with $\beta_T$ three-sparse; 
specifically, 
the first three elements of $\beta_T$ are one and the rest are zero.
The corresponding results are provided in the left plot and right plot of Figure~\ref{fig:misdimchange}, respectively.
Lower dimensionality (left) seems to improve the performance gap between the two methods as compared to the left plot of Figure~\ref{fig:misbaseandlargeu}, while this gap shortens in high-dimensions with extreme sparsity (right).
\begin{figure}[h]
    \centering
    \includegraphics[width=0.7\columnwidth]{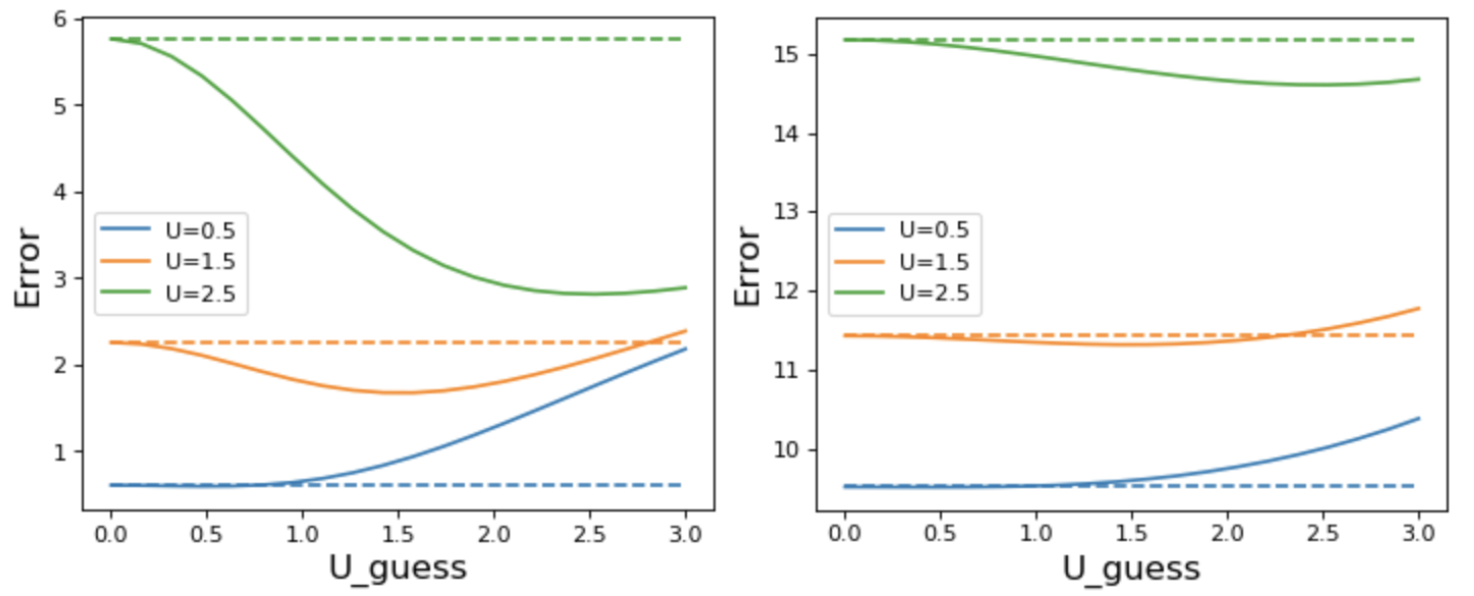}
    \caption{Simulation Results with Different Dimensions. Solid Lines Represent the Proposed Method. Dashed Lines Represent the Basic Pooling Method.}
    \label{fig:misdimchange}
\end{figure}

We additionally consider varying the magnitude (sup-norm) of the ground truth value of the parameter $\beta_T$ from the baseline in Section~\ref{ssec:simmisspec} such that $\beta_T$ is the vector whose component values are all $0.1$, or that $\beta_T$ is the vector whose component values are all $10$.  The corresponding results are provided in the left plot and right plot of Figure~\ref{fig:misbetamag}, respectively. We observe that the plots remain exactly the same as the left plot of Figure~\ref{fig:misbaseandlargeu}, which is consistent with Theorem~\ref{prop:UB:LRM} that the worst-case performance of our method does not depend on the magnitude of $\beta_T$, and similarly for the basic pooling method.
\begin{figure}[htb]
    \centering
    \includegraphics[width=0.7\columnwidth]{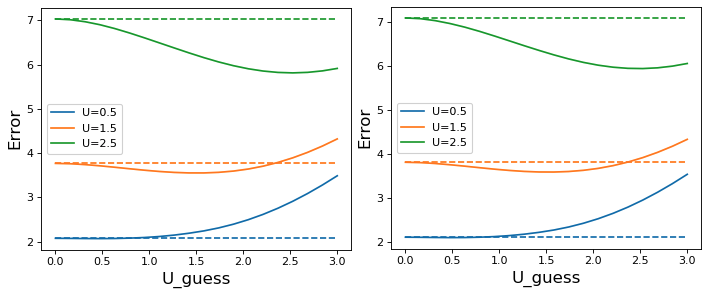}
    \caption{Simulation Results with Different Magnitudes of the Ground Truth $\beta_T$.
    Solid Lines (Dashed Lines) Represent the Proposed Method (Basic Pooling Method).
    }
    \label{fig:misbetamag}
\end{figure}

We next investigate the behavior of the performance gap as the sample size increases. In particular, we vary the sample sizes to be $n_S= 10000$ and $n_T=1000$, and present the corresponding results in Figure~\ref{fig:missamplechange}. We observe from these results that the plot remains the same as the left plot of Figure~\ref{fig:misbaseandlargeu}, demonstrating the robustness of the performance gap.

\begin{figure}[htb]
    \centering
    \includegraphics[width=0.38\columnwidth]{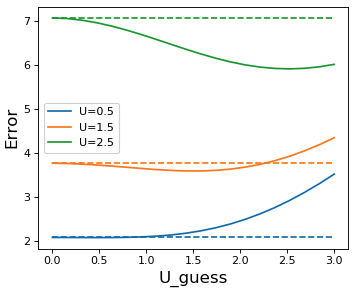}
    \caption{Simulation Results with A Larger Sample Size.
    Solid Lines (Dashed Lines) Represent the Proposed Method (Basic Pooling Method).
    }
    \label{fig:missamplechange}
\end{figure}

As part of our additional simulation results related to the comparisons against competing methods in the literature, we consider the same settings in Section~\ref{sec:moderatecomparison} and Section~\ref{ssec:highcomparison} with a smaller $n_S/n_T$ ratio. In particular, we vary the sample sizes to be $n_S=200$ and $n_T=100$.  For the methods under consideration, we report the average estimation error of $\theta_T$ in~\eqref{eq:lossfunction}, and its standard deviation, over $1000$ simulation runs. The results are summarized in the left-half of Table~\ref{tab:comparisonbothsmallerns} for moderate-dimensions and the right-half of Table~\ref{tab:comparisonbothsmallerns} for high-dimensions. We observe behaviors of our proposed method relative to the competing methods to be consistent with those in Section~\ref{sec:moderatecomparison} and Section~\ref{ssec:highcomparison}.

\begin{table*}[tb]
\caption{Simulation Results Comparing the Proposed Method to Other Competing Methods in Moderate-Dimensions (Left-Half) and in High-Dimensions (Right-Half).
``Basic'' Represents the Lowest of the Errors Attained by the Three Basic Methods in Section~\ref{ssec:compare}. Numbers in Parentheses Are Standard Deviations.}
\label{tab:comparisonbothsmallerns}
\begin{center}
\begin{tabular}{|c|c|c|c|c||c|c|c|c|}
\hline
$U$ &Basic&Proposed&Two-Step&Trans-Lasso &Basic&Proposed&Two-Step&Trans-Lasso \\
\hline
0.5 & 6.8(2.1) &  7.1(2.2) & 10.6(3.6) & \textbf{1.1}(2.4) & 33.9(7.2) & \textbf{30.9}(7.6) & 61.1(23.7) & 34.2(7.5)\\
1.5 & 8.4(2.5) & 8.5(2.7) & 11.4(3.5) & \textbf{1.7}(2.2) & 36.3(6.2) & \textbf{32.6}(6.6) & 62.9(22.2) & 37.4(6.6)\\
2.5 & 10.5(2.7) & 9.9(2.5) &  14.1(4.5) & \textbf{2.5}(2.6) & 38.6(5.9)& \textbf{34.3}(6.1) & 69.8(22.1) & 40.5(6.3)\\
3.5 & 13.1(3.2) & 11.6(3.1) & 15.5(4.6) & \textbf{4.8}(3.4) & 44.6(5.6) & \textbf{38.3}(6.4) & 77.1(23.7) & 47.6(6.0)\\
4.5 & 17.5(3.7) & 14.5(3.9) & 17.6(6.2) & \textbf{8.7}(4.5) & 50.8(6.1) & \textbf{41.5}(5.8) & 84.8(23.5) & 55.2(7.4)\\
5.5 & 19.3(5.8) & 15.3(4.8) & 18.7(7.2) & \textbf{11.7}(4.9) & 58.3(7.2) & \textbf{45.4}(8.1) & 98.8(24.9) & 63.6(8.1)\\
6.5 & 19.3(6.4) & 17.3(6.0) & 19.7(7.7) & \textbf{16.6}(6.6) & 55.9(12.5) & \textbf{49.3}(8.7) & 115.9(29.4) & 74.4(7.5)\\
7.5 & 19.4(6.8) & \textbf{18.3}(6.4) & 19.3(7.5) & 20.4(7.2) & 57.1(11.6) & \textbf{52.9}(9.4) & 130.3(30.1) & 87.3(9.8)\\
8.5 & 20.1(5.5) & \textbf{18.5}(6.1) & 19.0(6.4) & 21.7(9.4) & 56.3(12.8) & \textbf{53.4}(10.7) & 151.8(28.3) & 98.9(9.8)\\
9.5 & 21.4(12.4) & \textbf{18.8}(6.9) & 20.5(7.6) & 23.2(9.0) & 57.7(13.9) & \textbf{56.9}(9.6) & 167.6(34.5) & 112.5(10.8)\\
\hline
\end{tabular}
\end{center}
\end{table*}

\subsection{Real-World Dataset Experiments}
\label{apdx:sim_realdata}
As part of our final set of empirical results considered in Section~\ref{ssec:realdatacomparison}, we compare our proposed method against the other competing methods using the Uber$\&$Lyft dataset (https://www.kaggle.com/brllrb/uber-and-lyft-dataset-boston-ma) of Uber and Lyft cab rides collected in Boston, MA. 
Recall that we consider UberX to be the source model and standard Lyft service to be the target model where the learning problem comprises prediction of the price using $d=32$ numerical features.
Given that the focus of our study is on the benefit of transfer learning, we restrict our experiments to small random subsamples and we summarize in Table~\ref{tab:comparisonrealdata} the results taken over $100$ repeated independent experiments.

The Uber$\&$Lyft dataset consists of 55094 observations for the source and 51235 observations for the target, from which we obtain and present in Figures~\ref{fig:sourceregestimation} and~\ref{fig:targetregestimation} the corresponding ground-truth regression parameters as bar plots for the source and target models, respectively.
We observe that the parameters are moderately sparse with the feature ``surge\_multiplier'' having much larger magnitudes in comparison with the other features.
Figure~\ref{fig:targetvssourceregestimation} plots the difference between these two parameters, from which we again observe that the difference is sparse. We also compute the $l_q$-distance, $q\in\{0,0.5,1\}$, between the source and target parameters and summarize these results in Table~\ref{tab:comparisonrealdatadifferentnorms}.

The results in Table~\ref{tab:comparisonrealdata} show that our proposed method attains a better performance on average, by a small margin relative to the basic methods and by a large margin relative to the two-step estimator and Trans-Lasso. We note that the $l_q$ sparsity, $q\in[0,1]$, required by the last two methods does not reasonably capture the contrast between the source and target models of the real-world dataset, due to the moderate dimensions and the existence of one dominating feature
In particular, while \cite{ref:bastani2021predicting} shows that the two-step joint estimator performs well when the difference of regression parameters is $l_0$ sparse, their result applies to high-dimensions which is in contrast to the $32$ dimensions of the dataset at hand.
This helps to explain why their two-step joint estimator does not yield good performance in Table~\ref{tab:comparisonrealdata} where the moderate dimensions and a single feature of ``surge\_multiplier'' significantly affects the model. 
\cite{ref:LiCaiLi2020} show that their Trans-Lasso method performs well when the $l_1$-difference of the regression parameters (denoted by $h_1$) satisfies $h_1 \ll s\sqrt{\log d/n_T}$,  where $s$ is the $l_0$ sparsity of the target regression parameters.
However, for the real-world dataset, we observe that $s \approx 8$ (see Figure~\ref{fig:targetregestimation}), that the factor $\sqrt{\log(d)/n_T}$ is smaller than $0.2$,
that $h_q$ (i.e., $l_q$-difference of the regression parameters) increases as $q$ decreases from $1$ toward $0$, and that $h_q$ presents a discontinuity at $q=0$ with $h_0=8$, some of which is illustrated in Table~\ref{tab:comparisonrealdatadifferentnorms}.
Hence, the $l_1$ (or $l_q$ in general) relationship assumed by \cite{ref:LiCaiLi2020} does not appear to hold for the dataset at hand. 
This in turn helps to explain why Trans-Lasso does not yield good performance in Table~\ref{tab:comparisonrealdata}.

\begin{figure}[htb]
    \centering
    \includegraphics[width=0.9\columnwidth]{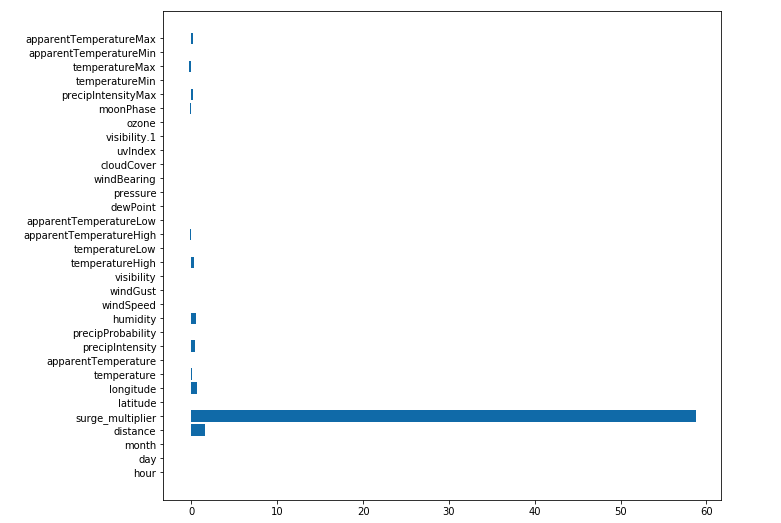}
    \caption{Estimated Regression Parameters of Source Model From the Entire Source Dataset.
    }
    \label{fig:sourceregestimation}
\end{figure}

\begin{figure}[htb]
    \centering
    \includegraphics[width=0.9\columnwidth]{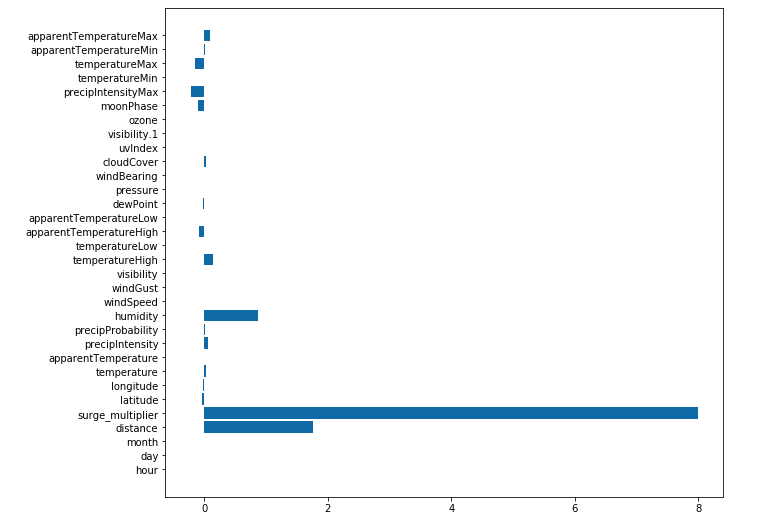}
    \caption{Estimated Regression Parameters of Target Model From the Entire Target Dataset.
    }
    \label{fig:targetregestimation}
\end{figure}

\begin{figure}[htb]
    \centering
    \includegraphics[width=0.83\columnwidth]{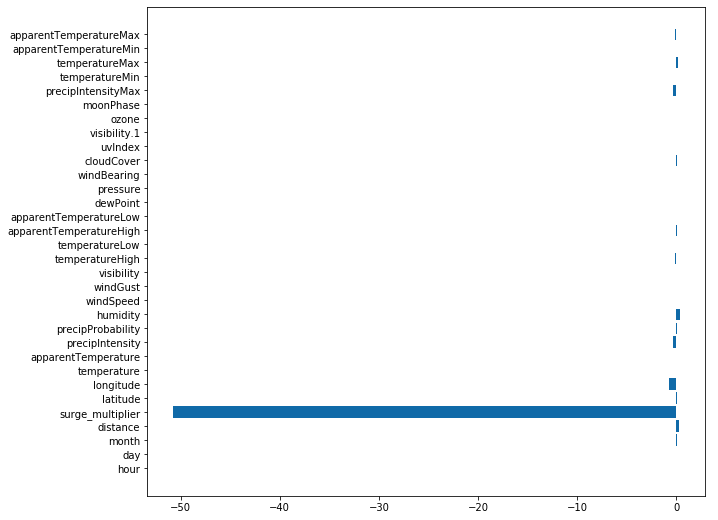}
    \caption{Difference in Regression Parameters of Target versus Source Model From the Entire Dataset.
    }
    \label{fig:targetvssourceregestimation}
\end{figure}

\begin{table}[h]
\caption{Results For Distance (under Different Sparsity Norms) between Source and Target Ground-truth Parameters on Uber$\&$Lyft Data. The $l_0$-distance Has a Threshold of $0.1$ to Determine Non-zero Entries.} \label{tab:comparisonrealdatadifferentnorms}
\begin{center}
\begin{tabular}{|c|c|c|}
\hline
$l_0$-distance ($h_0$) &$l_{0.5}$-distance ($h_{0.5}$) &$l_1$-distance ($h_{1}$)\\
\hline
8 &  189.67 & 53.65\\
\hline
\end{tabular}
\end{center}
\end{table}

\end{document}